\newenvironment{customthm}[1]
  {\innercustomthm}
  {\endinnercustomthm}
\newtheorem{theorem}{Theorem}
\newtheorem{lemma}{Lemma}
\newtheorem*{remark}{Remark}
\newtheorem{definition}{Definition}
\newcommand{\normmm}[1]{{\left\vert\kern-0.25ex\left\vert\kern-0.25ex\left\vert #1
   \right\vert\kern-0.25ex\right\vert\kern-0.25ex\right\vert}}
\newcommand{\normvec}[1]{\Vert #1 \Vert}
\title{My Publication Title --- Single Author}
\author {
    Author Name
}
\title{WAT: Improve the Worst-class Robustness in Adversarial Training}
\author {
    % Authors
    Boqi Li\textsuperscript{\rm 1},
    Weiwei Liu\textsuperscript{\rm 1}\footnote{Corrsponding Author.} 
}
\begin{document}

\maketitle

\begin{abstract}
Deep Neural Networks (DNN) have been shown to be vulnerable to adversarial examples. Adversarial training (AT) is a popular and effective strategy to defend against adversarial attacks. Recent works \cite{analysis:benz,frl,analysis:tian} have shown that a robust model well-trained by AT exhibits a remarkable robustness disparity among classes, and propose various methods to obtain consistent robust accuracy across classes. Unfortunately, these methods sacrifice a good deal of the average robust accuracy. Accordingly, this paper proposes a novel framework of worst-class adversarial training and leverages no-regret dynamics to solve this problem. Our goal is to obtain a classifier with great performance on worst-class and sacrifice just a little average robust accuracy at the same time. We then rigorously analyze the theoretical properties of our proposed algorithm, and the generalization error bound in terms of the worst-class robust risk. Furthermore, we propose a measurement to evaluate the proposed method in terms of both the average and worst-class accuracies. Experiments on various datasets and networks show that our proposed method outperforms the state-of-the-art approaches.
\end{abstract}

\section{Introduction}
Deep Neural Networks (DNNs) are known to be vulnerable to adversarial examples \cite{adv:Szegedy,fgsm}. An adversarial example in a small perturbation from test data can easily fool the DNN model, which remains a security issue and is unacceptable in some applications of DNN, such as road sign classification \cite{adv:physical} , text classification \cite{adv:text}, self-supervised learning \cite{certCL} and object detection \cite{adv:tshirt}. 

Numerous works \cite{cert,pgd,odeAdv} have attempted to improve the model robustness with various defenses. Adversarial Training (AT) \cite{fgsm,pgd} is one of the most widely used and effective methods of defense. AT generates adversarial examples from the training data in every mini-batch, then uses these examples to replace training data or adds them into the training data during the training phase.

Although AT obtains great average adversarial robustness performance over classes, \cite{analysis:benz,frl,analysis:tian} find that a robust model well-trained by AT exhibits a large robustness disparity in different classes on various balanced datasets, like the left classifier in Figure \ref{fig:overview}. Thus, AT leaves some classes vulnerable and may not perform well on some specific classes in certain real-world secure systems. For example, in the autonomous driving context, a classifier that has been well trained by AT may perform well on traffic sign classification and achieve great adversarial robustness performance on average while still exhibiting vulnerabilities on specific signs, which represents a potential danger for users.

Recently, some works \cite{analysis:benz,frl} have attempted to solve this problem. \citet{analysis:benz} analyze this phenomenon and use cost-sensitive learning to make the performance consistent over classes. \citet{frl} propose employing re-weight and re-margin strategies to solve this problem. Both of these methods obtain consistent robust accuracy over classes, but they sacrifice a good deal of the average robust accuracy, like middle classifier in Figure \ref{fig:overview}. To overcome the limitations of \citet{analysis:benz,frl}, this paper proposes a novel min-max learning paradigm to optimize worst-class robust risk and leverages no-regret dynamics to solve the proposed min-max problem, our goal is to achieve a classifier with great performance on worst-class but sacrifice a little average robust accuracy like the right classifier in Figure \ref{fig:overview}. Moreover, we rigorously analyze the theoretical properties of our proposed algorithm, and the generalization error bound in terms of the worst-class robust risk.
Empirically, we find that a trade-off exists between average and worst-class robust accuracies, and accordingly propose a measurement to evaluate the method in terms of both the average and worst-class accuracies.

\begin{figure*}[t]
\centering
\includegraphics[width=0.85\linewidth]{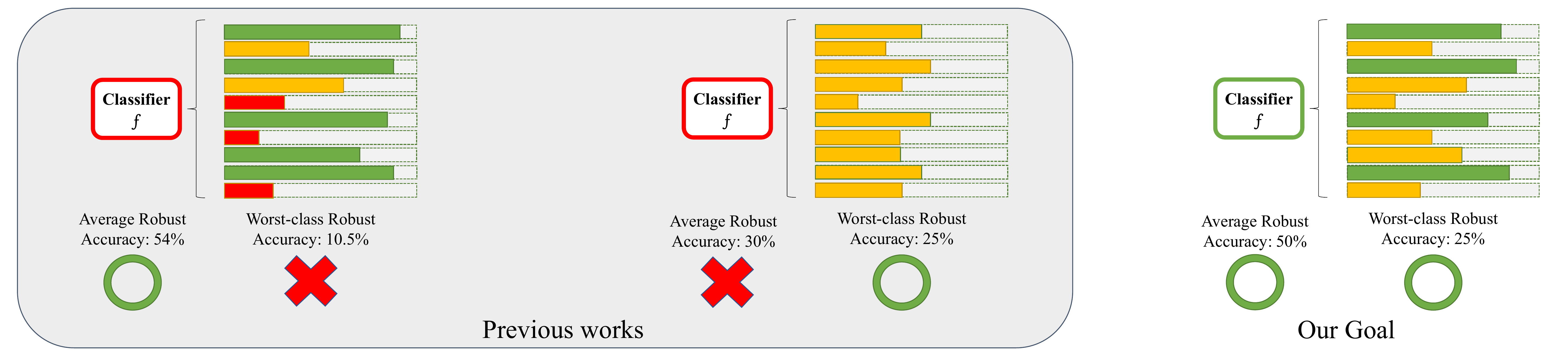}
\caption{A brief introduction of our main idea. Previous works only care about average or worst-class robust accuracy, while our method considers both worst-class and average robust accuracy.}
\label{fig:overview}
\end{figure*}

The main contributions in this paper are as follows:

\begin{itemize}

\item We propose a novel framework of worst-class adversarial training that leverages no-regret dynamics to solve the problem.

\item We analyze the theoretical properties of our proposed algorithm, and the generalization error bound in terms of the worst-class robust risk.

\item  A measurement is presented to evaluate the method in terms of both the average and worst-class accuracies.

\item Extensive experimental results on various datasets and networks verify that our proposed method outperforms state-of-the-art baselines.

\end{itemize}

\section{Related Work}
\label{relate}

$\textbf{Adversarial Robustness}.$ To improve adversarial robustness of DNN, adversarial training \cite{fgsm,pgd} is one of the most effective defenses. A large number of works \cite{trades,analysis:closelook,analysis:tradeoff} have explored the trade-off between robustness and accuracy. Amongst them, TRADES \cite{trades} is one of the most popular methods due to its promising experimental results. Besides, \citet{tradeoffFair} analyze the trade-off between robustness and fairness. \citet{theory:vcdim,theory:yin,theory:multi} theoretically analyze the adversarial robust generalization of a model while \citet{theory:firstorder} analyzes the first-order adversarial vulnerability of neural networks. Recently, a few works have been developed to further improve its performance, such as using unlabeled data \cite{unlabel-at}, feature alignments \cite{feature-at}, wider networks \cite{wrn-at} and a few tricks \cite{trick-at}.

$\textbf{Disparity of Class-wise Robustness}.$
In natural training, class-imbalance is a classical problem in long-tailed data. In such problem, major class has more data than minor class. Most of previous works to solve this problem can be concluded as resampling \cite{resample:zhou} and cost-sensitive learning \cite{csl:zou}. Recently, some works have opted to focus on the class-wise robustness disparity in the adversarial training. \citet{analysis:benz} study this problem empirically, and find that AT obtains a larger robust disparity among classes than that of natural training even in balanced data (e.g., CIFAR-10). \citet{analysis:tian} also find the similar experimental results on six different datasets. To solve this problem, \citet{analysis:benz} use a cost-sensitive learning fashion which is widely used in natural learning with imbalanced datasets; \citet{frl} propose a new method to reduce the class-wise variance of robust accuracy over classes. However their approaches both sacrifice a good deal of the average robust accuracy because they aim to make the performance consistent over classes. To address this issue, this paper aims to improve the worst-class adversarial robustness, while obtaining less average robust accuracy loss than previous works.

\section{Preliminaries}
\label{preliminaries}

This paper considers a $K$-class classification problem over input space $\mathcal{X}$ and output label space $\mathcal{Y} = \{1,2,\cdots,K \}$. Assume $\mathcal{D}$ is a distribution over $\mathcal{Z} = \mathcal{X} \times \mathcal{Y}$. We denote the sample as $ \mathcal{S}:\{\mathcal{X} \times \mathcal{Y}\}^n $. Let $\mathcal{F}$ be the hypothesis class, while $f(\mathbf{x};\theta) : \mathcal{X} \rightarrow \mathcal{Y}$ is a classifier in $\mathcal{F}$, where $\mathbf{x}$ is the input variable and $f$ is parametrized by $\theta$. Let $ \ell : \mathcal{F} \times \mathcal{Z} \rightarrow [0, B] $ be the loss function. Throughout this paper, we assume that $\ell$ is bounded. The expected natural risk $\mathcal{R}_{nat}(f)$ and expected robust risk $\mathcal{R}_{rob}(f)$ over distribution $\mathcal{D}$ and classifier $ f(\mathbf{x};\theta)$ can then be defined with respect to loss function $\ell$ as follows:
\begin{equation}
\mathcal{R}^{nat}(f) = \mathop{\mathbb{E}}\limits_{(\mathbf{x},y) \sim D} \ell(f(\mathbf{x} ;\theta), y) \label{nat_risk}
\end{equation}
\begin{equation}
\mathcal{R}^{rob}(f) = \mathop{\mathbb{E}}\limits_{(\mathbf{x},y) \sim D}  \max \limits_{\mathbf{x}^{\prime} \in \mathcal{B}(\mathbf{x},\epsilon)} \ell(f(\mathbf{x}^{\prime} ;\theta), y) \label{rob_risk}
\end{equation}
where $\mathcal{B}(\mathbf{x},\epsilon) = \{\mathbf{x}^{\prime}: ||\mathbf{x}^{\prime} - \mathbf{x} ||_p \leq \epsilon \}$ denotes the $\ell_p$-norm $(p \geq 1)$ ball centered at $\mathbf{x}$ with radius $\epsilon$.

\subsection{Worst-class Adversarial Robustness}

\begin{figure}[h]
\centering
\subfigure[PGD on CIFAR-10]{
    \begin{minipage}[t]{0.47\linewidth}
    \centering
    \includegraphics[width=1.0\linewidth]{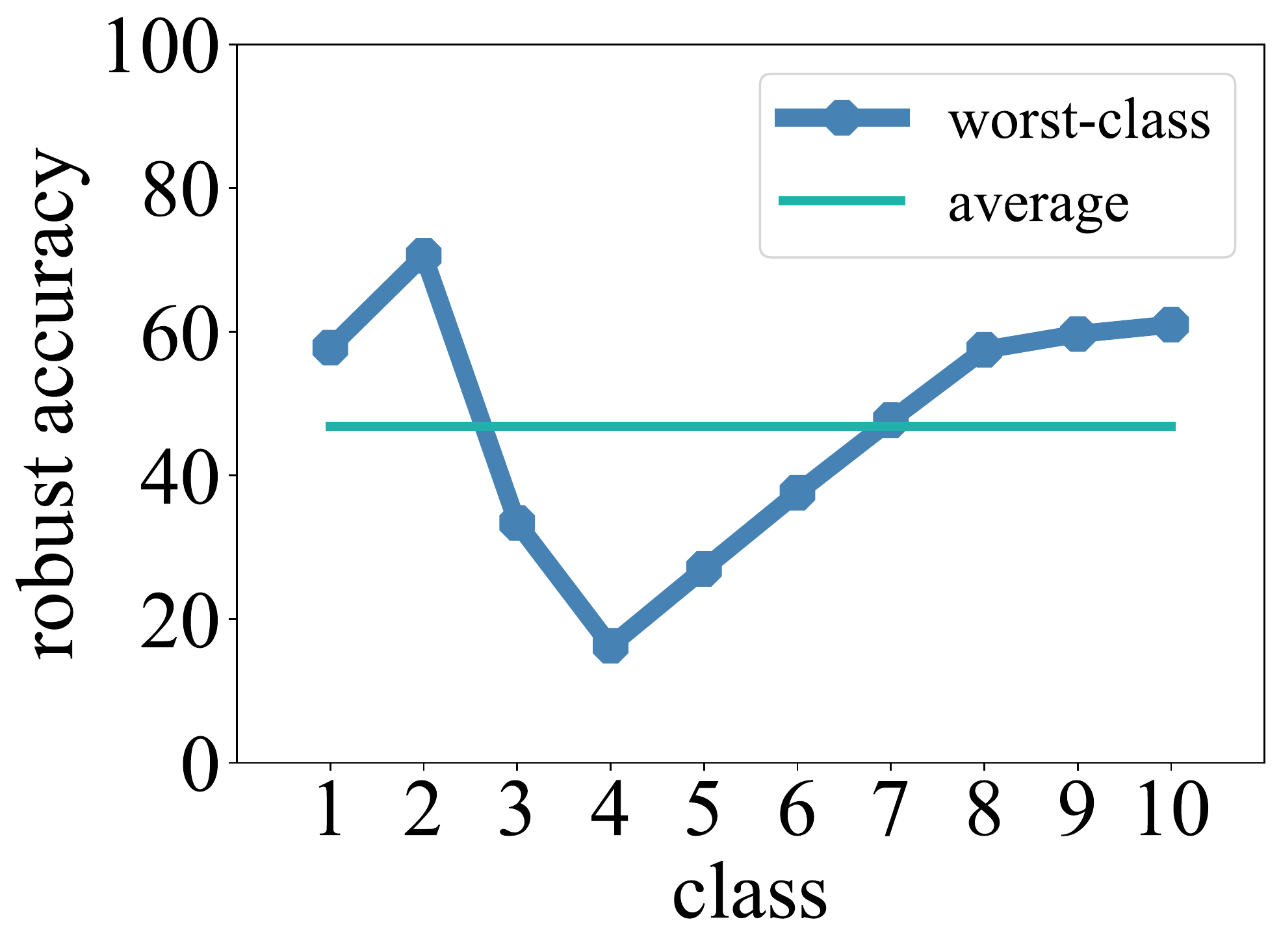}
    \label{fig:cifar-pgd}
    \end{minipage}
}
\subfigure[TRADES on CIFAR-10]{
    \begin{minipage}[t]{0.47\linewidth}
    \centering
    \includegraphics[width=1.0\linewidth]{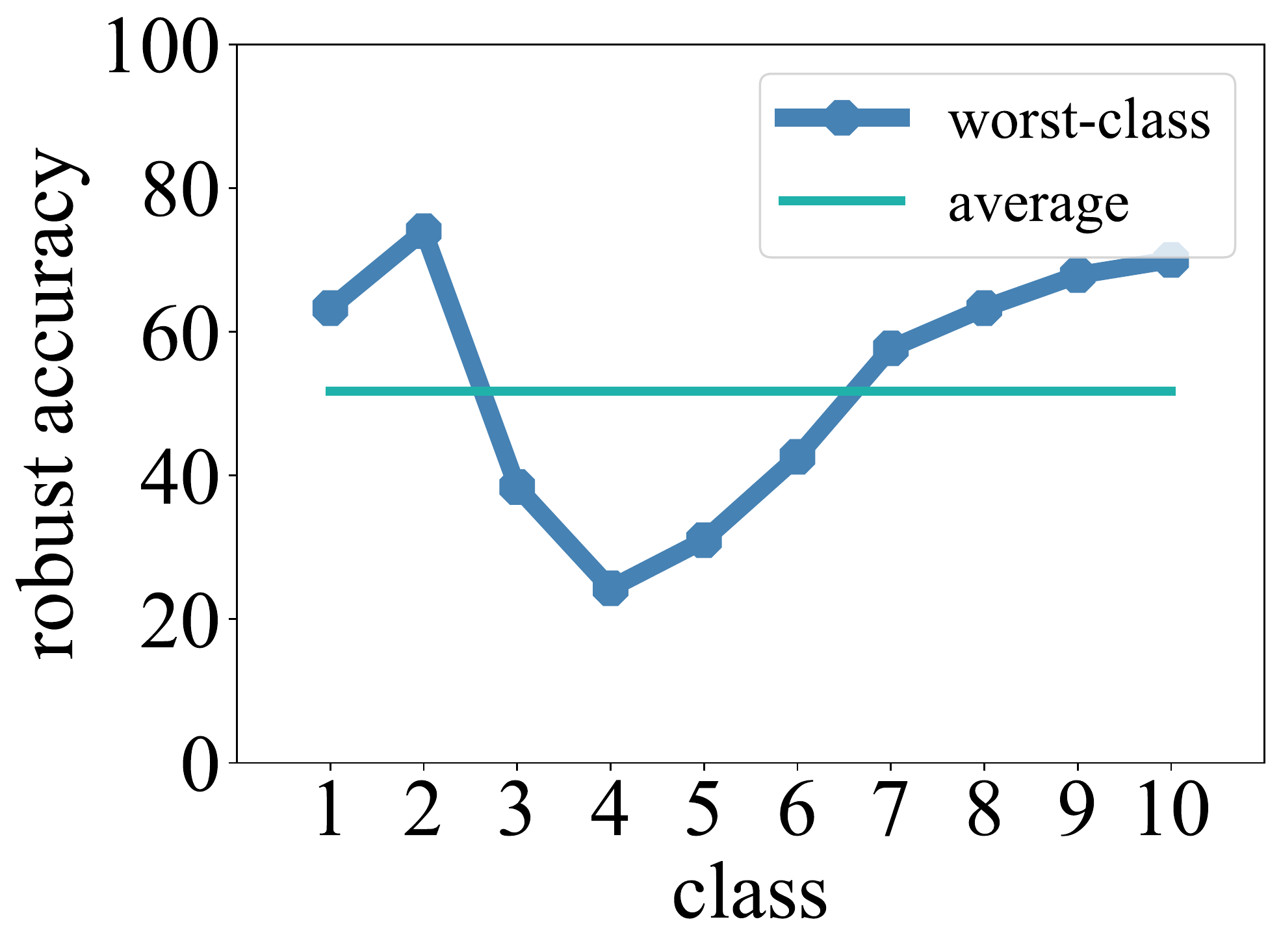}
    \label{fig:cifar-trades}
    \end{minipage}
}
\caption{ Class-wise robustness disparity of different AT using ResNet-18 on CIFAR-10. The robust accuracy (\%) is evaluated under PGD-20 attack.}
\label{fig:motivation}
\end{figure}

Typically, one aims to use ERM to obtain a good classifier from a hypothesis class with low empirical risk. However, a classifier with low empirical risk may not perform well on the worst class. To illustrate this phenomenon, we present the results of different AT variants on the CIFAR-10 in Figure \ref{fig:motivation}. From results in Figure \ref{fig:cifar-trades}, we can see that TRADES \cite{trades} obtains a worst-class robust accuracy of 23\% under PGD-20 \cite{pgd} attack, while the average robust accuracy of TRADES is 46\%. A similar phenomenon occurs when different variants of AT are used on different datasets. This degree of robustness disparity among classes is unacceptable in certain real-world secure systems. To study this problem, we define class-wise risk and worst-class risk as follows. We use $\mathcal{D}_k$ to denote the distribution of sample belonging to class $k$ class, and $\mathcal{S}_k$ to denote the sample drawn from $\mathcal{D}_k$.
\begin{equation}
\mathcal{R}_{k}^{nat}(f) = \mathop{\mathbb{E}}\limits_{(\mathbf{x},y) \sim \mathcal{D}_k } [ \ell(f(\mathbf{x} ;\theta), y)]  \label{class_nat_risk}
\end{equation}
\begin{equation}
\mathcal{R}_{k}^{rob}(f) = \mathop{\mathbb{E}}\limits_{(\mathbf{x},y) \sim \mathcal{D}_k }  [ \max \limits_{\mathbf{x}^{\prime} \in \mathcal{B}(\mathbf{x},\epsilon)} \ell(f(\mathbf{x}^{\prime} ;\theta), y)] \label{class_rob_risk}
\end{equation}

Similarly, we define the worst-class natural risk as $\mathcal{R}^{nat}_{wc}(f) = \max_{k \in [K]} \mathcal{R}^{nat}_k(f)$ and worst-class robust risk as $\mathcal{R}^{rob}_{wc}(f) = \max_{k \in [K]} \mathcal{R}^{rob}_k(f)$, where $[K]$ denotes the set of all positive integers in $[1, K]$. It follows that we have $
\mathcal{R}^{rob}_{wc}(f) \geq \mathcal{R}^{rob}(f) \geq \mathcal{R}^{nat}(f)
$.  

\subsection{Disparity of Adversarial Robustness}
\label{subsec:rho}

Figures \ref{fig:cifar-pgd} and \ref{fig:cifar-trades} show that a large gap exists between the worst-class robust accuracy and the average robust accuracy. Therefore, a classifier with low expected natural risk and expected robust risk may have high robust risk on some classes.

To solve this problem, recently, various strategies \cite{analysis:benz,frl} aimed at making the robust performance of the model consistent over all classes have been proposed. For example, \cite{frl} propose the re-weight and re-margin strategies on TRADES. Empirically, these works show that existing strategies typically sacrifice the average robust accuracy to improve worst-class robust accuracy. It is hard to choose proper weight for each class.

\begin{figure}[ht]
\centering
\includegraphics[width=0.3\textwidth]{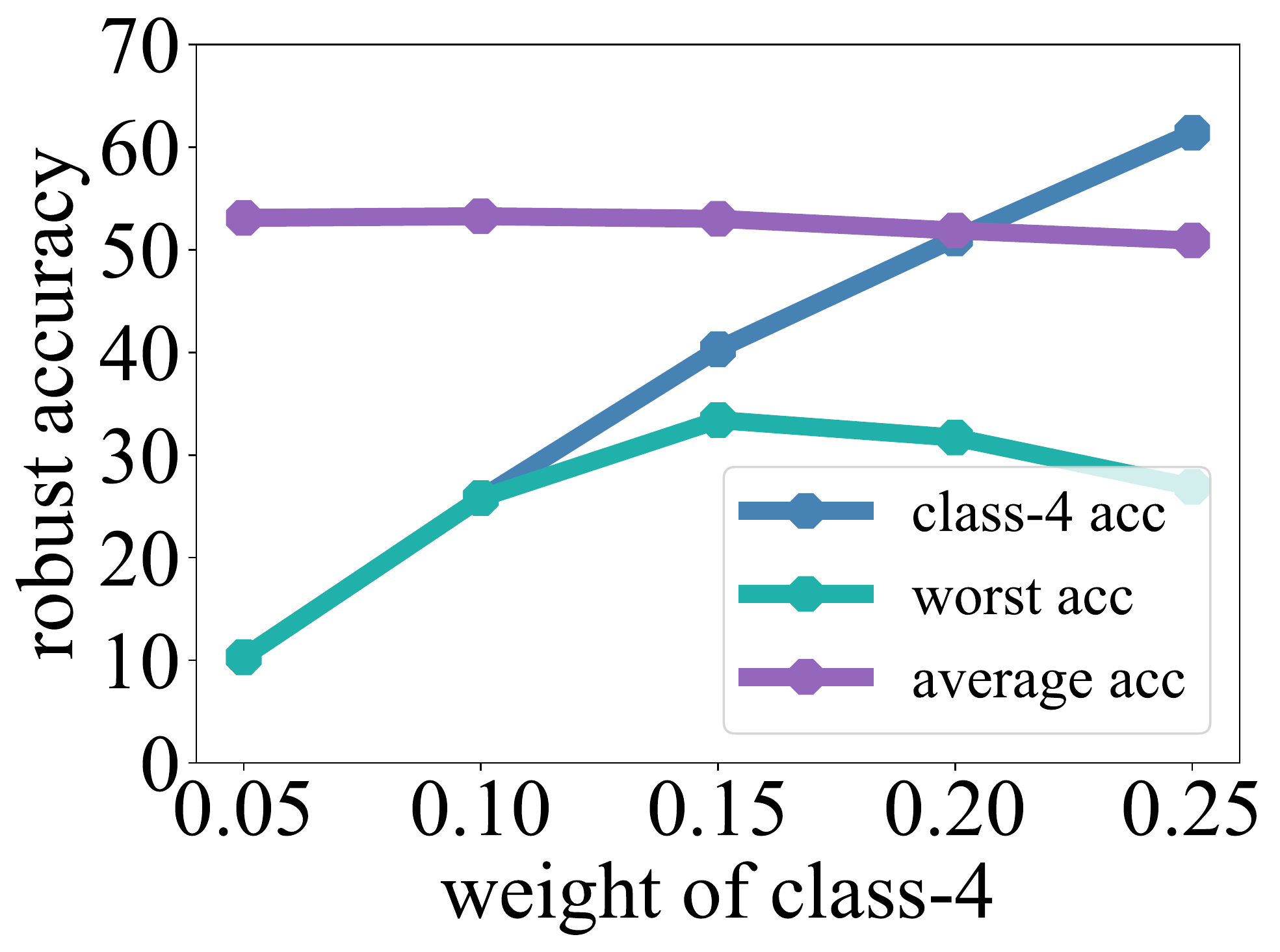}
  \caption{Trade-off between average and worst-class robust accuracy of ResNet-18 on CIFAR-10.}
  \label{fig:weight}
\end{figure}

In Figure \ref{fig:weight}, we use TRADES to train a ResNet-18 \cite{resnet} on CIFAR-10. We assign weight $w_k$ for class-$k$ and use a weighted loss $\sum_{k=1}^{K} w_k \ell_{trades}(\cdot, \cdot)$, where $\ell_{trades}(\cdot, \cdot)$ is the loss used in TRADES and is defined as $\ell_{trades} := \max_{\mathbf{x}^{\prime} \in B(\mathbf{x}, \epsilon)} CE(h_{\theta}(\mathbf{x}),y) + \beta KL(h_{\theta}(\mathbf{x}), h_{\theta}(\mathbf{x}^{\prime}))$. We change the weight of class-4 from 0.05 to 0.25 and set the weights of the other classes to be $(1-w_4)/(K-1)$. In Figure \ref{fig:cifar-pgd}, we find that the worst robust accuracy appears in class-4, so we choose to change the weights of class-4.

From the results in Figure \ref{fig:weight}, we can determine that when the weight of class-4 is increased from 0.05 to 0.15, the worst-class robust accuracy of TRADES grows by 23.1\%, while the average robust accuracy of TRADES drops by 0.09\%. Moreover, when the weight of class-4 is increased from 0.15 to 0.25, the worst-class and average robust accuracy drop at the same time. It is therefore demonstrably difficult to find the optimal weight for each class, and it is imperative to propose a measurement to simultaneously evaluate how much a given strategy would boost worst-class robust accuracy and decrease the average robust accuracy.

We use $\mathcal{A}$ to denote a vanilla adversarial training without any strategy, and $ \mathcal{A}_{\Delta}$ to denote adversarial training with the strategy $\Delta$. We run the algorithm $\mathcal{A}$ on hypothesis class $\mathcal{F}$ and sample $S_{train}$, and obtain the classifier $\hat f = \mathcal{A}(\mathcal{F}, \mathcal{S}_{train})$.

The average natural accuracy of a classifier $f$  with respect to distribution $\mathcal{D}$ is defined as
\begin{equation}
    Acc^{nat}(f,\mathcal{D}) = 1 - \mathbb{P}_{(\mathbf{x}, y) \sim \mathcal{D}}\left\{ y \neq f(\theta,\mathbf{x}) \right\}
\end{equation}
while average robust accuracy is defined as
\begin{equation}
    Acc^{rob}(f,\mathcal{D}) \! = \! 1  \!  -  \!   \mathbb{P}_{(\mathbf{x}, y) \! \sim \!  \mathcal{D}}\left\{\exists \mathbf{x}^{\prime} \! \in \! \mathcal{B}(\mathbf{x},\epsilon) \!,\! \text { s.t. }\!  y \!\neq\! f(\theta,\mathbf{x}^{\prime}) \right\}
\end{equation}
Similarly, we denote the $k$-th class natural accuracy as $Acc^{nat}_{k}(f,\mathcal{D})$, the worst-class natural accuracy as $Acc^{nat}_{wc}(f,\mathcal{D})$, the $k$-th class robust accuracy as $Acc^{rob}_{k}(f,\mathcal{D})$ and the worst-class robust accuracy as $Acc^{rob}_{wc}(f,\mathcal{D})$.
Let the average robust accuracy, the accuracy of the $k$-th class and the worst-class accuracy of a classifier $f$ on a test set $\mathcal{S}_{test}$ be $Acc^{rob}(f, \mathcal{S}_{test})$, $Acc_k(f, \mathcal{S}_{test}) $ and $Acc_{wc}(f, \mathcal{S}_{test}) $, respectively. For simplicity, we here use $\widehat{Acc}(f)$ to denote $Acc(f, \mathcal{S}_{test}) $.
This paper proposes a novel measurement to evaluate a method in terms of both the average and worst-class accuracy.
\begin{equation} \label{eq:rho}
\begin{aligned}
    \hat \rho( \mathcal{F}, \Delta, \mathcal{A}, \mathcal{S}) = &
    \frac{\widehat{Acc}_{wc}(\mathcal{A}_{\Delta}(\mathcal{F})) - \widehat{Acc}_{wc}(\mathcal{A}(\mathcal{F}))}
    {\widehat{Acc}_{wc}(\mathcal{A}(\mathcal{F}))} \\ -& 
    \frac{\widehat{Acc}(\mathcal{A}(\mathcal{F})) - \widehat{Acc}(\mathcal{A}_{\Delta}(\mathcal{F}))}
    {\widehat{Acc}(\mathcal{A}(\mathcal{F}))}
\end{aligned}
\end{equation}
Clearly, the larger the value of $\hat \rho$ is, the better a method performs.

\section{Proposed Method}
\label{method}

In this section, we formulate a novel min-max problem and then transform it into a two-player zero-sum game, and subsequently proposes a no-regret dynamics algorithm to solve the problem.

\subsection{No-regret Dynamics}
Consider a two-player zero-sum game, in which a decision-maker repeatedly plays a game against an adversary. More specifically, the decision-maker plays before the adversary and does not know the action taken by the adversary in each round. No-regret dynamics is one of the most efficient methods of achieving an $\epsilon$-coarse correlated equilibrium \cite{gametheory}.

Multiplicative Weight Updates Algorithm \cite{mwua} is one of the most widely used no-regret dynamic algorithms. Assume a game repeats for $T$ rounds, while the decision-maker has a choice of $n$ decisions. The decision-maker needs to repeatedly make a decision from the decision set and obtains an associated payoff from the adversary, while the best decision may not be known as a priori. Let $t = 1,2,\cdots, T$ denote the current round. In each round $t$, the decision-maker produces a distribution $\mathbf{p^{t}}$ over the decision set and chooses an action from the set according to $\mathbf{p^{t}}$. At this time, the adversary chooses a cost vector $\mathbf{C^{t}}$. Let $p^t_{k}$ be the $k$-th element of $\mathbf{p^{t}}$ while $C^t_{k}$ denotes the $k$-th element of $\mathbf{C^{t}}$.
Hedge Algorithm \cite{hedge} is one of Multiplicative Weights Updates Algorithm that uses an exponential function to adjust the weight of every decision as follows.
\begin{equation}
    p^{t}_{k} = \frac{\exp(\sum_{i=1}^{t-1} \eta C^{i}_{k} )}{\sum_{k=1}^{K} \exp(\sum_{i=1}^{t-1} \eta C^{i}_{k}) }.
\end{equation}
Clearly, Hedge Algorithm produces the weights depending on past performance.
Intuitively, this scheme works well because it tends to put heavy weights on high payoff decisions in the long run.

\subsection{Worst-class Adversarial Training}

The loss of a classifier $f$ on training set $\mathcal{S}_{tr}$ can be defined as
\begin{equation}
     L^{tr}_{0}(f) =  L^{tr}(f) = \frac{1}{\vert \mathcal{S}_{tr} \vert} \sum_{(\mathbf{x}_i,y_i) \in \mathcal{S}_{tr}} \ell_{trades}(f(\mathbf{x}_i ;\theta), y_i),
\end{equation}
where $\vert \cdot \vert$ denotes the cardinality of a set.
Let $L^{tr}_{k}(f)$ be the training loss on class $k$. Similarly, we use $L^{val}_{0}(f)$ and $L^{val}_{k}(f)$ to denote the loss of a classifier $f$ on the validation set $\mathcal{S}_{val}$ and validation loss on class $k$, respectively. $\ell_{trades}$ is the loss used in TRADES. 

We aim to minimize the following risk
\begin{equation}
 \min \limits_{f} \max_{k \in [0, K]} \mathcal{R}^{rob}_k(f),
\label{minmax}
\end{equation}

where $\mathcal{R}^{rob}_{0}(f) = \mathcal{R}^{rob}(f)$. We then formulate (\ref{minmax}) as a zero-sum game. In such a game, the learner has a decision set $\{\frac{\partial L^{tr}_{0}(f)}{\partial f},\cdots,\frac{\partial L^{tr}_{K}(f)}{\partial f}\}$, $L^{tr}_{0}(f)$ is the excepted training loss and $L^{tr}_{k}(f)$ is the training loss of class-$k$ for every $1 \leq k \leq K$. The best decision is not known as a priori. 

\begin{remark}
The reason that we add $\frac{\partial L^{tr}_{0}(f)}{\partial f}$ to decision set is the learner can directly choose $\frac{\partial L^{tr}_{0}(f)}{\partial f}$ as a decision in such a game.
\end{remark}

The weight of each decision is initialized as $1/(K+1)$. In epoch $t$, we use the validation set to evaluate the classifier, and use validation loss to denote the cost. The learning rate is $\lambda$. In epoch $t$, the learner updates the model according to the following rule:
\begin{equation}
    f^{t} = f^{t-1} - \lambda \sum_{k=0}^{K} w^{t}_{k} \frac{\partial L^{tr}_{k}(f^{t-1})}{\partial f},
\end{equation}
where  
\begin{equation}
    w^{t}_{k} = \frac{\exp(\sum_{i=1}^{t-1} \eta L^{val}_{k}(f^i) )}{\sum_{k=0}^{K} \exp(\sum_{i=1}^{t-1} \eta L^{val}_{k}(f^i))}.
\end{equation}
After the learner updates the model, it obtains a loss vector from the adversary. The algorithm is described in more detail in Algorithm \ref{alg:bat}. Algorithm \ref{alg:bat} outputs $f^*= \arg \max \limits_{k \in [K]} \min \limits_{f \in \{f^{1},\cdots, f^{T}\}} L^{val}_{k}(f)$. The following theorem provides the guarantee of the worst-class loss.
\begin{theorem}\label{thm:nr}
Assume the range of $L^{val}(f)$ is $[0,1]$, and $1/T \sum_{t=1}^{T} L^{val}_{k}(f^{t}) \geq 1/(1-\eta) \min_{t} L^{val}_{k}(f^{t}) $ for every $k$ and some $\eta \leq 1/2$. We then have
\begin{equation}
    \max_{k} \min_{t} L^{val}_{k}(f^{t}) \leq \frac{1}{T} \sum^{T}_{t=1} \sum^{K}_{k=0} w^{t}_{k}L^{val}_{k}(f^{t}) + \frac{\log (K+1)}{T\eta}. \label{weight}
\end{equation}
\end{theorem}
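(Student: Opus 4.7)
The plan is to follow the standard potential-function analysis of the Hedge algorithm, then combine it with the ratio hypothesis that relates the minimum round loss to the average loss. Define the potential $\Phi_t = \sum_{k=0}^{K} \exp\bigl(\eta \sum_{i=1}^{t} L^{val}_k(f^i)\bigr)$, so that $w^{t+1}_k = \exp(\eta \sum_{i=1}^{t} L^{val}_k(f^i))/\Phi_t$ and $\Phi_0 = K+1$. A one-step expansion gives
\begin{equation*}
\frac{\Phi_t}{\Phi_{t-1}} = \sum_{k=0}^{K} w^{t}_k \exp\!\bigl(\eta L^{val}_k(f^t)\bigr).
\end{equation*}

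Next I would use the numeric inequality $e^{x} \le 1 + x + x^2$ valid for $x \le 1$, which applies here since $\eta L^{val}_k(f^t) \le \eta \le 1/2 \le 1$. This bounds each ratio by $1 + \eta \langle w^t, L^{val}(f^t)\rangle + \eta^2 \sum_k w^t_k (L^{val}_k(f^t))^2$, and because $L^{val}_k(f^t) \in [0,1]$ we have $(L^{val}_k(f^t))^2 \le L^{val}_k(f^t)$, collapsing the second-order term into the first. Applying $\log(1+u) \le u$ and telescoping across $t = 1, \dots, T$ then yields
\begin{equation*}
\log \Phi_T - \log(K+1) \;\le\; \eta(1+\eta) \sum_{t=1}^{T} \sum_{k=0}^{K} w^t_k L^{val}_k(f^t).
\end{equation*}
For the matching lower bound, keep only the $k$-th summand in $\Phi_T$: $\log \Phi_T \ge \eta \sum_{t=1}^{T} L^{val}_k(f^t)$ for every $k$, hence $\log \Phi_T \ge \eta \max_k \sum_t L^{val}_k(f^t)$. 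Rearranging and dividing by $\eta T$ gives the standard one-sided regret estimate
\begin{equation*}
\max_k \frac{1}{T}\sum_{t=1}^{T} L^{val}_k(f^t) \;\le\; (1+\eta)\cdot\frac{1}{T}\sum_{t=1}^{T}\sum_{k=0}^{K} w^t_k L^{val}_k(f^t) + \frac{\log(K+1)}{\eta T}.
\end{equation*}

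Finally I would plug in the hypothesis $\min_t L^{val}_k(f^t) \le (1-\eta)\tfrac{1}{T}\sum_t L^{val}_k(f^t)$, which is just a rewriting of the stated assumption. Taking $\max_k$ on both sides and chaining it with the Hedge bound above produces a coefficient of $(1-\eta)(1+\eta) = 1-\eta^2 \le 1$ in front of the weighted empirical loss, and a coefficient of $(1-\eta) \le 1$ in front of the $\log(K+1)/(\eta T)$ term, which both simplify to give exactly \eqref{weight}.

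The main obstacle is ensuring that the two sources of slack combine cleanly: the $(1+\eta)$ multiplicative factor arising from the Taylor bound on $e^x$, and the $(1-\eta)$ factor arising from the ratio assumption. The choice $\eta \le 1/2$ makes the Taylor estimate valid, while the factorization $(1-\eta)(1+\eta) = 1-\eta^2 \le 1$ is precisely what absorbs the regret overhead so that the leading coefficient on the weighted average is $1$ rather than $1+\eta$. Apart from this bookkeeping, the remaining steps are the routine Hedge potential-telescoping calculation.
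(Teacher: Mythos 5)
Your proposal is correct and follows essentially the same route as the paper: establish the Hedge/multiplicative-weights regret bound for the $K+1$ experts, use $L^{val}_k(f^t)\in[0,1]$ to absorb the second-order term, and then chain in the assumption $\min_t L^{val}_{k}(f^{t}) \le (1-\eta)\tfrac{1}{T}\sum_{t=1}^{T} L^{val}_{k}(f^{t})$ before taking the maximum over $k$. The only difference is that the paper cites the regret bound as a black-box lemma (which yields a $1/(1-\eta)$ factor on the weighted loss), whereas you re-derive it via the potential $\Phi_t$ and obtain the slightly tighter $1+\eta$ factor; in both cases the $(1-\eta)$ coming from the assumption absorbs the overhead, since the weighted loss and the $\log(K+1)/(\eta T)$ term are nonnegative, and the stated inequality follows.
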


\begin{proof}
The proof of Theorem \ref{thm:nr} can be found in the Appendix.
\end{proof}

\begin{remark}
Theorem \ref{thm:nr} shows that if we choose a proper $\eta$, after $T$ rounds, the worst-class cost of the best classifier can be bounded by the average loss of previous rounds. Our bound also depends on $\eta$ and $T$; a larger $\eta$ and $T$ will provide a tighter bound.
\end{remark}

\begin{algorithm}[tb]
  \caption{WAT: Worst-class Adversarial Training}
  \label{alg:bat}
\begin{algorithmic}
  \STATE {\bfseries Input:} training data $\mathcal{S}_{tr}$, validation data  $\mathcal{S}_{val}$, learning rate $\lambda$, training epochs $T$, number of classes $K$ and hyper-parameter $\eta$.
  \STATE Initialize $f^{0},w_k^{0} = \frac{1}{K+1}$ for every $k \in [K]$.
  \FOR{$1 \leq t \leq T$}
  \STATE use $\mathcal{S}_{tr}$to obtain $L^{tr}_{0}(f^{t-1}),\cdots, L^{tr}_{K}(f^{t-1})$.
  \STATE use $\mathcal{S}_{val}$ to obtain $L^{val}_{0}(f^{t-1}),  \cdots  , L^{val}_{K}(f^{t-1})$.
  \STATE $f^{t} = f^{t-1} - \lambda \sum_{k=0}^{K} w^{t}_{k} \frac{\partial L^{tr}_{k}(f^{t-1})}{\partial f}$
  \FOR{$0 \leq k \leq K$}
  \STATE $w^{t+1}_{k} = \frac{\exp(\sum_{i=1}^{t} \eta L^{val}_{k}(f^i) )}{\sum_{k=0}^{K} \exp(\sum_{i=1}^{t} \eta L^{val}_{k}(f^i))}$.
  \ENDFOR
  \ENDFOR
  \STATE {\bfseries Output:} $f^*= \arg \max \limits_{k \in [K]} \min \limits_{f \in \{f^{1},\cdots, f^{T}\}} L^{val}_{k}(f)$.
\end{algorithmic}
\end{algorithm}

\section{Generalization Error Bound}
\label{analysis}
This section provides the generalization error bound in terms of the worst-class robust risk. The empirical natural risk and robust risk are defined as $ \mathcal{ \hat R}^{nat}(f) = \frac{1}{n}\sum^{n}_{i=1} \ell(f(\mathbf{x}_i ;\theta), y_i) $ and $ \mathcal{ \hat R}^{rob}(f) = \frac{1}{n}\sum^{n}_{i=1} \max_{\mathbf{x}^{\prime} \in \mathcal{B}(\mathbf{x},\epsilon)} \ell(f(\mathbf{x}^{\prime} ;\theta), y_i) $, respectively.

Rademacher complexity \cite{rademacher} is one of the classic measurements for generalization error. Let $\mathcal{S} = \{\mathbf{z}_1, \mathbf{z}_2, \cdots, \mathbf{z}_n\}$ be an independent and identically distributed (i.i.d.) sample with size $n$ and $\sigma_i$ be a random variable such that $\mathbb{P}[\sigma_i=1] = \mathbb{P}[\sigma_i=-1] = 1/2 $. The Rademacher complexity of function class $\mathcal{H}$ is defined as
$
\mathfrak{R}_{\mathcal{S}}(\mathcal{H}):=\frac{1}{n} \mathbb{E}_{\boldsymbol{\sigma}}\left[\sup _{h \in \mathcal{H}} \sum_{i=1}^{n} \sigma_{i} h\left(\mathbf{z}_{i}\right)\right].
$ We next analyze the gap between the empirical risk and population risk of the worst class. Let the training set $S_k$ be drawn i.i.d. from the distribution $\mathcal{D}_k$. The empirical $k$-th class robust risk is defined as
\begin{equation}
 \mathcal{ \hat R}^{rob}_k(f) = \frac{1}{|\mathcal{S}_k|}\sum_{(\mathbf{x_i},y_i) \in \mathcal{S}_k} \max \limits_{\mathbf{x}^{\prime} \in \mathcal{B}(\mathbf{x},\epsilon)} \ell(f(\mathbf{x_i^\prime} ;\theta), y_i) .
\end{equation}
The empirical worst-class robust risk over $\mathcal{S} := \cup_{k \in [K]} \mathcal{S}_k $ is $ \mathcal{ \hat R}^{rob}_{wc}(f) = \max_{k} \mathcal{ \hat R}^{rob}_k(f) $. and $\tilde{\ell}_{\mathcal{F}}$ is defined as $\ell_{\mathcal{F}} = \{(\mathbf{x},y) \to \ell(f(\mathbf{x}), y) : f \in \mathcal{F} \}$. We assume $\vert \mathcal{S}_k \vert = \vert \mathcal{S} \vert / K$ holds for every $k$. We present the following Theorem.
\begin{theorem}\label{thm:rad}
 Suppose that the range of $\ell(f(\mathbf{x}), y)$ is $[0,B]$. Let $\tilde \ell (f(\mathbf{x}),y) := \max_{\mathbf{x}^{\prime} \in \mathcal{B}(\mathbf{x},\epsilon)} \ell (f(\mathbf{x}^{\prime}),y)$. Then, for any $\delta \in (0,1)$, with probability at least $1-\delta$, the following holds for all $f\in \mathcal{F}$,
\[
\mathcal{ R}^{rob}_{wc}(f) \le \mathcal{ \hat R}^{rob}_{wc}(f) + 2B\max_{k}\mathfrak{R}_{\mathcal{S}_{k}}( \tilde \ell_{\mathcal{F}}) + 3B\sqrt{\frac{K\log \frac{2}{\delta}}{2|\mathcal{S}|}}.
\]
\end{theorem}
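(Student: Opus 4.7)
The plan is to reduce the worst-class generalization bound to $K$ single-class Rademacher bounds, combine them via a union bound, and then interchange the max over classes with a sum using the elementary inequality $\max_k(a_k+b_k+c_k)\le\max_k a_k+\max_k b_k+\max_k c_k$.

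First, I fix a class $k\in[K]$ and treat $\mathcal{S}_k$ as an i.i.d.\ sample of size $|\mathcal{S}|/K$ from $\mathcal{D}_k$. Since $\tilde\ell$ takes values in $[0,B]$, the classical Rademacher-complexity generalization theorem (symmetrization followed by a two-sided McDiarmid step to pass from the expected to the empirical Rademacher complexity) yields, with probability at least $1-\delta/K$ over the draw of $\mathcal{S}_k$, that for every $f\in\mathcal{F}$
\[
\mathcal{R}^{rob}_k(f) \le \mathcal{\hat R}^{rob}_k(f) + 2B\,\mathfrak{R}_{\mathcal{S}_k}(\tilde\ell_{\mathcal{F}}) + 3B\sqrt{\frac{\log(2K/\delta)}{2|\mathcal{S}_k|}}.
\]
The factor $B$ in front of the Rademacher term comes from rescaling $\tilde\ell$ to $[0,1]$, and the constant $3$ absorbs the two applications of McDiarmid. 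A union bound over the $K$ classes then guarantees that the display above holds simultaneously for all $k\in[K]$ with probability at least $1-\delta$.

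Finally, I apply $\max_k$ to both sides, use the identities $\mathcal{R}^{rob}_{wc}(f)=\max_k\mathcal{R}^{rob}_k(f)$ and $\mathcal{\hat R}^{rob}_{wc}(f)=\max_k\mathcal{\hat R}^{rob}_k(f)$, invoke the subadditivity $\max_k(a_k+b_k+c_k)\le\max_k a_k+\max_k b_k+\max_k c_k$, and substitute $|\mathcal{S}_k|=|\mathcal{S}|/K$; the residual term becomes $3B\sqrt{K\log(2K/\delta)/(2|\mathcal{S}|)}$, matching the claimed bound (the $\log K$ factor inside the logarithm appears to be absorbed in the statement's $\log(2/\delta)$). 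The analytic weight of the proof rests entirely in the per-class bound, which is textbook; the main obstacle I anticipate is only bookkeeping---keeping the per-class failure probability at $\delta/K$ so that the union bound delivers a global confidence of $1-\delta$, and being careful that the Rademacher term on the right-hand side inherits the index $k$ through $\mathcal{S}_k$ and must therefore be taken at the worst class.
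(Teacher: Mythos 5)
Your proposal follows essentially the same route as the paper: apply the standard per-class Rademacher generalization bound (the paper cites it as a lemma from Yin et al.) to each $\mathcal{S}_k$, take $\max_k$ of both sides, use the subadditivity $\max_k(a_k+b_k+c_k)\le\max_k a_k+\max_k b_k+\max_k c_k$, and substitute $|\mathcal{S}_k|=|\mathcal{S}|/K$. The one substantive difference is the probability accounting, and there your version is the more careful one. The paper invokes the per-class lemma at confidence $1-\delta$ for each $k$ separately and then takes the maximum over $k$ as if all $K$ events held simultaneously, without any union bound; as written, that argument only delivers confidence $1-K\delta$ for the worst-class statement, so the $3B\sqrt{K\log(2/\delta)/(2|\mathcal{S}|)}$ term in the theorem is not actually justified by the paper's own proof. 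Your allocation of $\delta/K$ per class followed by a union bound is the correct fix, and the resulting $3B\sqrt{K\log(2K/\delta)/(2|\mathcal{S}|)}$ is the honest bound --- the extra $\log K$ inside the logarithm is not ``absorbed'' by the statement's $\log(2/\delta)$; it is a genuine (if mild) correction to the constant in the theorem. In short: same decomposition, same key lemma, but you should keep your union-bound bookkeeping and state the bound with $\log(2K/\delta)$ rather than trying to match the paper's constant.
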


\begin{proof}
The proof of Theorem \ref{thm:rad} can be found in the Appendix.
\end{proof}

\subsection{Multi-class Linear Classifiers}
This section studies the generalization error of multi-class linear classifiers. We here consider a $K$-class classification problem. Let $\mathcal{F}_{\mathbf{W}}$ be a multi-class linear classifier hypothesis, and $f_{\mathbf{W}}: X \rightarrow \mathbb{R}^K$ in $\mathcal{F}_{\mathbf{W}}$ be parameterized by a matrix $\mathbf{W}$ with dimension $K \times d$. The $k$-th coordinate of $f_{\mathbf{W}}(\mathbf{x})$ is the score of the $k$-th class, and the prediction of $f_{\mathbf{W}}$ is the class with the highest score among the $K$ classes. Let $\mathbf{w}_k \in \mathbb{R}^d$ be the $k$-th column of $\mathbf{W}^\top$ and be upper bounded by $W$ under the $\ell_p$ norm $(p \geq 1)$: $\mathcal{F}_{\mathbf{W}} = \{ f_{\mathbf{W}}(\mathbf{x}):||\mathbf{W}^\top||_{p,\infty} \leq W  \}$.
For multi-class classification problems, we define the margin operator $\mathcal{M}(\bm{\xi},y): \mathbb{R}^K \times [K] \rightarrow \mathbb{R}$ as $\mathcal{M}(\bm{\xi},y) = \xi_y - \max_{y' \neq y} \xi_{y'}$, and a classifier$f$ predicts correct if and only if $\mathcal{M}(\bm{\xi},y) > 0$. The ramp loss is defined as follows:
\begin{equation}
    \phi_{\gamma}(t)=
    \begin{cases}
    1 & t \leq 0, \\
    1-\frac{t}{\gamma} & 0<t<\gamma, \\
    0 & t \geq \gamma.
    \end{cases}
\end{equation}
Based on the margin operator and ramp loss, we have $ \ell(f_{\mathbf{W}}(\mathbf{x}),y) = \phi_{\gamma}(\mathcal{M}(f_{\mathbf{W}}(\mathbf{x}),y)) $ and $ \tilde \ell(f_{\mathbf{W}}(\mathbf{x}),y) = \max \limits_{\mathbf{x}^{\prime} \in \mathcal{B}(\mathbf{x},\epsilon)} \phi_{\gamma}(\mathcal{M}(\mathbf{f_{\mathbf{W}}}(\mathbf{x}),y)) $. We use $\mathbbm{1}(\cdot)$ to denote a \{0,1\}-valued indicator function. We then present the following Theorem.
\begin{theorem}\label{thm:lin}
Consider the multi-class linear classifiers in the adversarial setting, and suppose that $\frac{1}{p} + \frac{1}{q} = 1$, $p,q \ge 1$. For any fixed $\gamma >0$ and $W > 0$, we have with probability at least $1-\delta$, for all $\mathbf{W}$ such that $\normvec{\mathbf{W}^\top}_{p,\infty} \le W$,
\begin{align*}
1 - Acc^{rob}_{wc}(f,\mathcal{D})
\leq \frac{K}{\vert \mathcal{S} \vert } \sum_{(x_i,y_i) \in \mathcal{S}}  E_{i}
+ \frac{2 W K^3}{\gamma \vert \mathcal{S} \vert} U + c,
\end{align*}
where \\
$E_{i}\!=\!\mathbbm{1}\!\left(\! \left\langle\mathbf{w}_{y_{i}}\!,\! \mathbf{x}_{i}\right\rangle \! \leq \! \gamma \!+ \! \max _{y^{\prime} \neq y_{i}}\left(\left\langle\mathbf{w}_{y^{\prime}}, \mathbf{x}_{i}\right\rangle \! + \!\epsilon \normvec{\mathbf{w}_{y^{\prime}}\!\!-\!\!\mathbf{w}_{y_{i}}}_{1}\right)\!\right),$ \\ 
$c \!=\! \frac{2 W K^2 \epsilon d^{\frac{1}{q}}}{\gamma \sqrt{ \vert \mathcal{S} \vert }}  + 3 \sqrt{\frac{K \log \frac{2}{\delta}}{2 \vert \mathcal{S} \vert }},$\\$U\!=\! \max_{y,k} \mathbb{E}_{\boldsymbol{\sigma}}\left[\left\| \sum_{(\mathbf{x}_i,y_i) \in \mathcal{S}_k} \sigma_{i} \mathbf{\mathbf{x}}_{i} \mathbbm{1}\left(y_{i}=y\right)\right\|_{q}\right].$%  $E_{i}, U, c$ is defined in Appendix \ref{app:lin}.
\end{theorem}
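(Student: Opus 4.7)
The plan is to reduce the worst-class adversarial $0$--$1$ error to the worst-class adversarial ramp loss, apply Theorem~\ref{thm:rad} with $B=1$, and then explicitly compute (i) an empirical surrogate via the dual-norm form of the inner adversarial maximization and (ii) a Rademacher-complexity bound for the adversarial ramp-loss class in the multi-class linear setting.

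First I would observe that the event $\{\exists \mathbf{x}'\in\mathcal{B}(\mathbf{x},\epsilon):y\ne f_{\mathbf{W}}(\theta,\mathbf{x}')\}$ coincides with $\{\min_{\mathbf{x}'\in\mathcal{B}(\mathbf{x},\epsilon)}\mathcal{M}(f_{\mathbf{W}}(\mathbf{x}'),y)\le 0\}$, and since $\phi_\gamma(t)\ge\mathbbm{1}(t\le 0)$, the $0$--$1$ adversarial error is dominated by $\tilde\ell(f_{\mathbf{W}}(\mathbf{x}),y)\in[0,1]$. Plugging $B=1$ into Theorem~\ref{thm:rad} immediately produces the confidence term $3\sqrt{K\log(2/\delta)/(2|\mathcal{S}|)}$, which is exactly the second summand of $c$. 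For the empirical piece, expanding $\mathcal{M}(f_{\mathbf{W}}(\mathbf{x}'),y_i)=\min_{y'\ne y_i}\langle\mathbf{w}_{y_i}-\mathbf{w}_{y'},\mathbf{x}'\rangle$ and using H\"older's inequality on the $\ell_p$ ball yields
\[
\min_{\mathbf{x}'\in\mathcal{B}(\mathbf{x}_i,\epsilon)}\mathcal{M}(f_{\mathbf{W}}(\mathbf{x}'),y_i)=\langle\mathbf{w}_{y_i},\mathbf{x}_i\rangle-\max_{y'\ne y_i}\!\left(\langle\mathbf{w}_{y'},\mathbf{x}_i\rangle+\epsilon\normvec{\mathbf{w}_{y'}-\mathbf{w}_{y_i}}_q\right),
\]
and combining with $\phi_\gamma(t)\le\mathbbm{1}(t\le\gamma)$ gives $\tilde\ell(f_{\mathbf{W}}(\mathbf{x}_i),y_i)\le E_i$ (the $\normvec{\cdot}_1$ appearing in the theorem corresponds to the $\ell_\infty$ threat model, $q=1$). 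Using $|\mathcal{S}_k|=|\mathcal{S}|/K$ together with $\max_k\sum_{i\in\mathcal{S}_k}E_i\le\sum_i E_i$ produces the leading $\frac{K}{|\mathcal{S}|}\sum_i E_i$ summand.

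Next I would bound $\mathfrak{R}_{\mathcal{S}_k}(\tilde\ell_{\mathcal{F}})$. Talagrand's contraction lemma removes the $1/\gamma$-Lipschitz ramp and reduces the task to the Rademacher average of the adversarial margin class $\{(\mathbf{x},y)\mapsto\min_{y'\ne y}\langle\mathbf{w}_y-\mathbf{w}_{y'},\mathbf{x}\rangle-\epsilon\normvec{\mathbf{w}_y-\mathbf{w}_{y'}}_q\}$. Splitting the inner minimum over $y'\ne y$ by a union bound/decomposition and the outer quantifier over the true class $y$ each contribute a factor of $K$, so the resulting complexity breaks into a clean linear piece of the form $\sum_i\sigma_i\langle\mathbf{w}_y,\mathbf{x}_i\rangle\mathbbm{1}(y_i=y)$, whose H\"older bound with $\normvec{\mathbf{W}^\top}_{p,\infty}\le W$ gives $W\cdot U/|\mathcal{S}_k|$, and an adversarial penalty piece in which $\normvec{\mathbf{w}_{y'}-\mathbf{w}_y}_q\le 2Wd^{1/q}$ combined with a Massart-type $\sqrt{|\mathcal{S}_k|}$ bound yields the $\epsilon d^{1/q}/\sqrt{|\mathcal{S}|}$ term of $c$. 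The rescaling $|\mathcal{S}_k|=|\mathcal{S}|/K$ then promotes the $K^2$ from the multi-class union to the $K^3$ that appears in the leading $U$-term.

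The hard part will be this last step: carefully carrying the multi-class union argument through the adversarial min/max (two factors of $K$) while interacting it correctly with the per-class rescaling from $|\mathcal{S}_k|=|\mathcal{S}|/K$ so the coefficients land at exactly $K^3$ and $K^2$, and cleanly separating the data-dependent Rademacher quantity $U$ from the deterministic perturbation term $\epsilon d^{1/q}$. A secondary technicality is applying contraction to $\tilde\ell$ rather than to $\ell$, which requires noting that $\tilde\ell$ is still the composition of a $1/\gamma$-Lipschitz scalar map with a function of $\mathbf{W}$ obtained by a pointwise max/min operation that preserves the contraction.
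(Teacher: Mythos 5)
Your proposal is correct in outline but takes a genuinely different route from the paper. The paper does not re-derive the per-class generalization bound at all: it directly invokes a known result of Yin et al. (Lemma~3 in the appendix), which already packages, for a single i.i.d.\ sample of size $n$, the empirical indicator term, the $\frac{2WK}{\gamma}\bigl[\epsilon\sqrt{K}d^{1/q}/\sqrt{n}+\cdots\bigr]$ complexity term, and the $3\sqrt{\log(2/\delta)/(2n)}$ confidence term; it then applies that lemma to each class sample $\mathcal{S}_k$, takes $\max_k$ of both sides using $\max(f+g)\le\max f+\max g$, bounds the per-class indicator average by the full-sample average via non-negativity, and substitutes $\vert\mathcal{S}_k\vert=\vert\mathcal{S}\vert/K$ to produce the $K$, $K^2$, $K^3$ and $\sqrt{K}$ factors. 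You instead route through Theorem~\ref{thm:rad} with the ramp loss ($B=1$) and propose to re-prove the content of the cited lemma from first principles (dual-norm evaluation of the inner minimization, contraction, multi-class decomposition); your reduction $1-Acc^{rob}_{wc}\le\mathcal{R}^{rob}_{wc}$, the bound $\tilde\ell\le E_i$, the step $\max_k\frac{1}{\vert\mathcal{S}_k\vert}\sum_{\mathcal{S}_k}E_i\le\frac{K}{\vert\mathcal{S}\vert}\sum_{\mathcal{S}}E_i$, and the accounting of the powers of $K$ all match what the paper obtains. What your route buys is self-containedness; what it costs is that the step you flag as hard is exactly the nontrivial content of the cited lemma, and note that plain Talagrand contraction does not apply directly to the multi-class margin $\mathcal{M}(\cdot,y)$ (one needs a vector-contraction inequality or the explicit per-pair decomposition used by Yin et al.), so that step is a real obligation, not a formality. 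Two caveats: first, your parenthetical ``$q=1$'' conflates two independent dualities --- the $\normvec{\cdot}_1$ in $E_i$ is the dual of the $\ell_\infty$ threat model, whereas $q$ is the dual of the exponent $p$ in the weight constraint $\normvec{\mathbf{W}^\top}_{p,\infty}\le W$ and governs $U$ and $d^{1/q}$; identifying them would change the statement. Second, both your route (through Theorem~\ref{thm:rad}) and the paper's apply a high-probability bound to each of the $K$ classes and then take a maximum without a union bound, so strictly $\log(2/\delta)$ should be $\log(2K/\delta)$; this is a shared issue, not a defect specific to your argument.
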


\begin{proof}
The proof of Theorem \ref{thm:lin} can be found in the Appendix.
\end{proof}

\begin{remark}
Only if we optimize worst-class robust risk, as in our method, Theorem 2 and 3 hold. However, previous works do not optimize this risk and Theorem 2 and 3 are not applicable to them.
\end{remark}

\section{Experiments}
\label{experiment}

\begin{table*}[h!]
\caption{ Comparison results of all methods using ResNet-18 on CIFAR-10 and CIFAR-100. We evaluate every method in terms of both accuracy (\%) and $\rho$. We report the average natural accuracy, worst-class natural accuracy, average robust accuracy, worst-class robust accuracy,  $\rho_{nat}$, $\rho_{pgd}$, $\rho_{cw}$ and $\rho_{AA}$ for every method. We use \textbf{bold} to denote the best value in every metric. }
\begin{center}
\setlength{\tabcolsep}{2mm}{
\begin{tabular}{lcccccccccccc}
\toprule
CIFAR-10 & \multicolumn{3}{c}{Natural} & \multicolumn{3}{c}{PGD-100}  & \multicolumn{3}{c}{CW}  & \multicolumn{3}{c}{AutoAttack}\\
\midrule
Method & Avg. & Wst. & $\rho_{nat}$ & Avg. & Wst. & $\rho_{pgd}$  & Avg. & Wst.  & $\rho_{cw}$  & Avg. & Wst. & $\rho_{AA}$\\
\midrule
TRADES      & \textbf{82.11} & 64.6 & 0 & \textbf{51.69} & 25.2 & 0 & 50.38 & 24.1 & 0 & \textbf{48.64} & 21.7 & 0\\
FRL-RW      & 81.75 & 69.2 & \textbf{0.067} & 49.02  & 30.8 & 0.171 & 47.80 & 27.8 & 0.102 & 46.08 & 25.4  & 0.118\\
FRL-RWRM    & 80.69 & \textbf{71.4} & 0.088 & 49.16 & 32.0 & 0.221  & 47.45 & 28.1 & 0.108 & 45.94 & 26.1 & 0.147\\
CSL         & 76.29 & 67.1 & -0.032 & 43.30 & 33.8 & 0.179 & 41.60 & 31.3 & 0.124 & 40.32 & 29.2 & 0.175\\
\midrule
Ours        & 80.98 & 69.5 & 0.062 & 49.13 & \textbf{36.6} & \textbf{0.403} & 47.57 & \textbf{33.3} & \textbf{0.326} & 46.04 & \textbf{30.1} & \textbf{0.334}\\
\bottomrule
\end{tabular}
}
\end{center}
\begin{center}

\setlength{\tabcolsep}{2mm}{
\begin{tabular}{lcccccccccccc}
\toprule
CIFAR-100 & \multicolumn{3}{c}{Natural} & \multicolumn{3}{c}{PGD-100}  & \multicolumn{3}{c}{CW}  & \multicolumn{3}{c}{AutoAttack}\\
\midrule
Method & Avg. & Wst. & $\rho_{nat}$ & Avg. & Wst. & $\rho_{pgd}$  & Avg. & Wst.  & $\rho_{cw}$  & Avg. & Wst. & $\rho_{AA}$\\
\midrule
TRADES      & \textbf{54.57}    & 19.00  & 0 & \textbf{27.39} & 3.00 & 0 & \textbf{24.87} & 1.00 & 0 & \textbf{23.57} & 1.00 & 0\\
FRL-RW      &  53.08 & \textbf{24.00} & \textbf{0.236} & 25.76 & 3.00 & -0.060 & 22.39 & 2.00 & 0.900 & 21.09 & 1.00 & -0.105 \\
FRL-RWRM    & 52.55  & 22.00 & 0.121 & 26.04 & 4.00 & 0.284 & 22.33 & 2.00 & 0.898 & 21.11 & 2.00 & 0.896 \\
CSL         & 53.83 & 21.00 & 0.092  & 26.19 & 4.00 & 0.290 & 22.35 & 2.00 & 0.899 & 22.25 & 2.00 & 0.944 \\
\midrule
Ours        & 53.99 & 19.00 & -0.020 & 26.91 & \textbf{5.00} & \textbf{0.643} & 24.26 & \textbf{3.00}  & \textbf{1.945} & 22.89 & \textbf{3.00} & \textbf{1.971 }\\
\bottomrule
\end{tabular}
}
\end{center}
\label{tab:main}
\end{table*}

\begin{figure*}[h!]
\centering

\subfigure[Ours vs TRADES]{
    \begin{minipage}[t]{0.22\linewidth}
    \centering
    \includegraphics[width=0.85\linewidth]{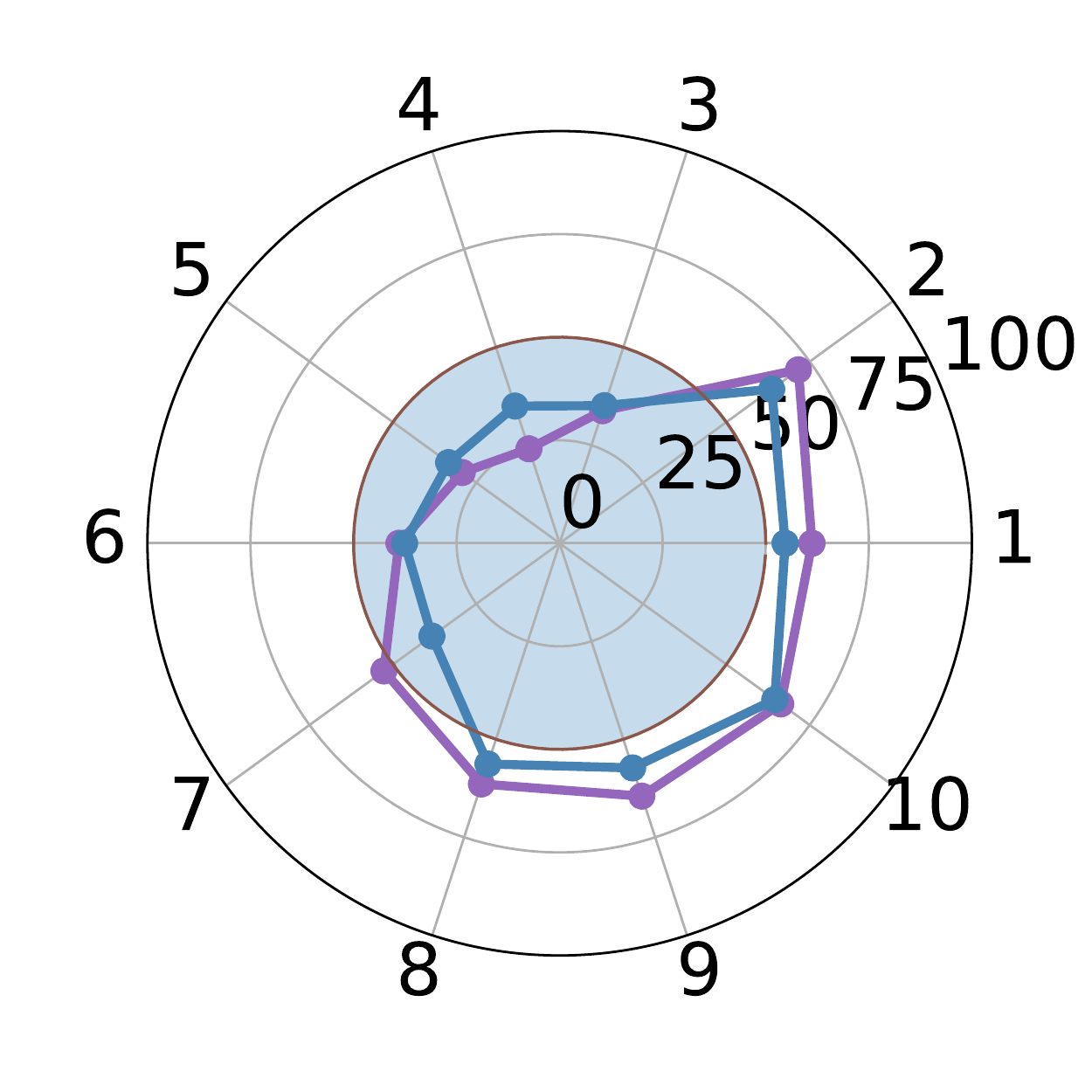}
    \label{fig:vs:trades}
    \end{minipage}
}
\subfigure[Ours vs CSL]{
    \begin{minipage}[t]{0.22\linewidth}
    \centering
    \includegraphics[width=0.85\linewidth]{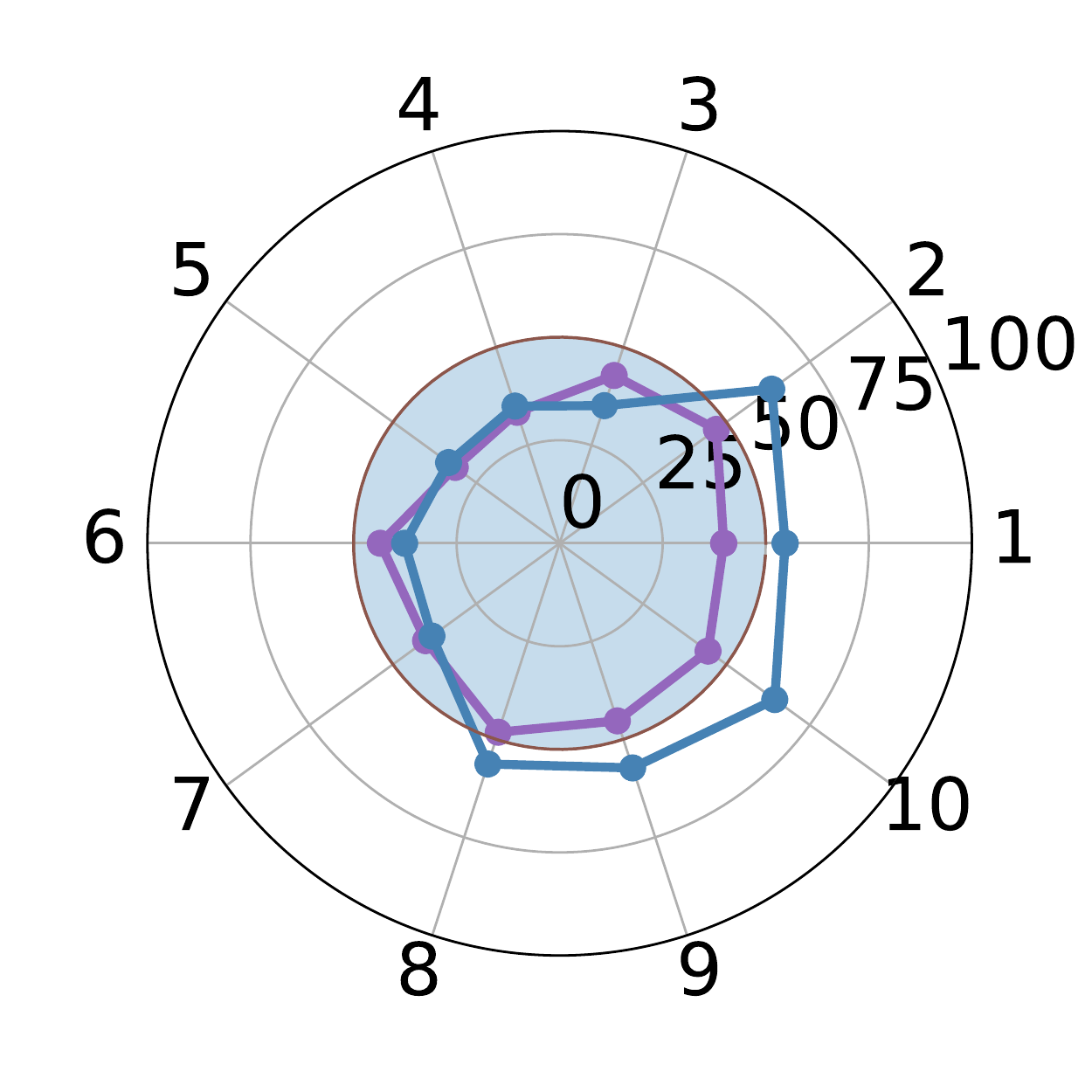}
    \label{fig:vs:csl}
    \end{minipage}
}
\subfigure[Ours vs FRL-RW]{
    \begin{minipage}[t]{0.22\linewidth}
    \centering
    \includegraphics[width=0.85\linewidth]{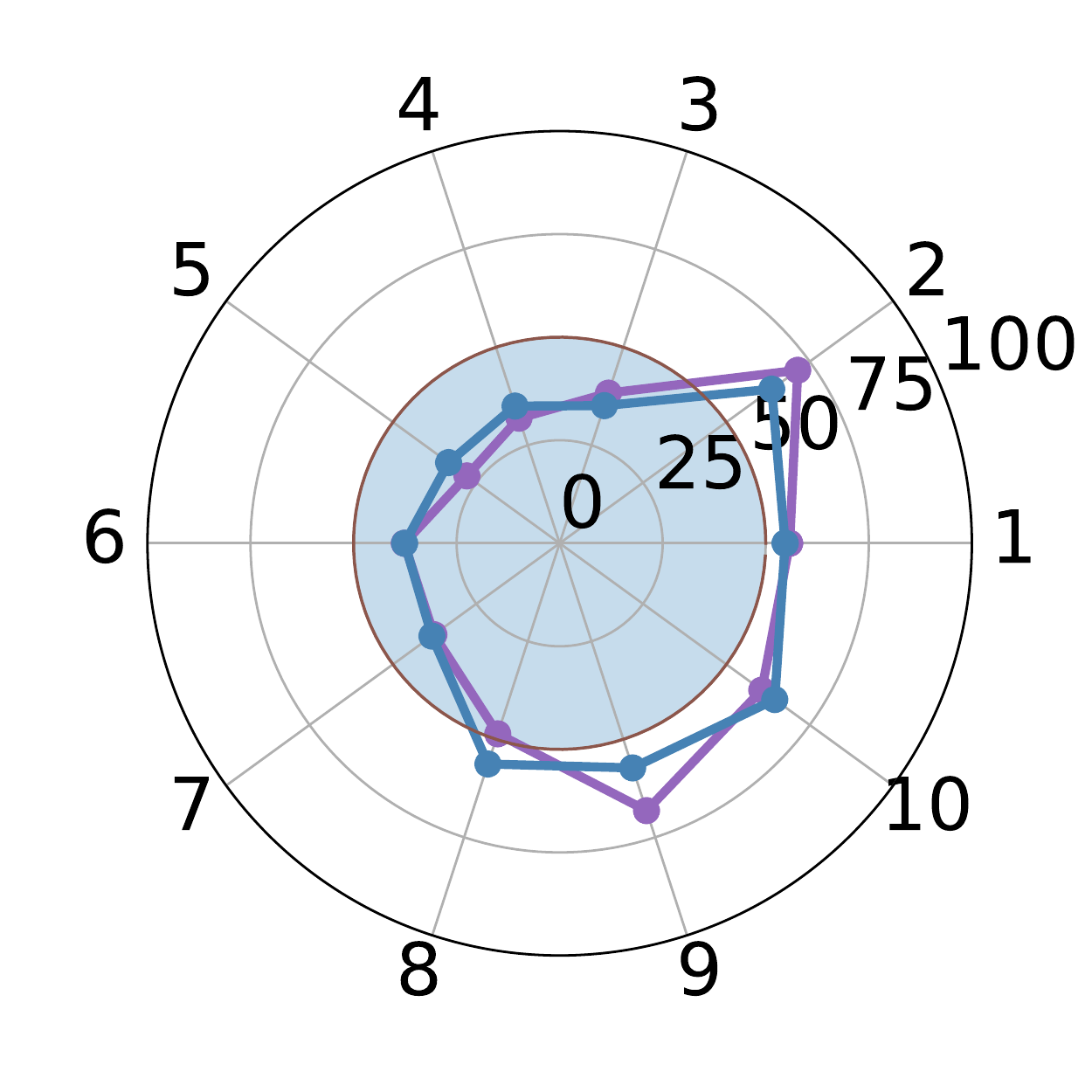}
    \label{fig:vs:rw}
    \end{minipage}
}
\subfigure[Ours vs FRL-RWRM]{
    \begin{minipage}[t]{0.22\linewidth}
    \centering
    \includegraphics[width=0.85\linewidth]{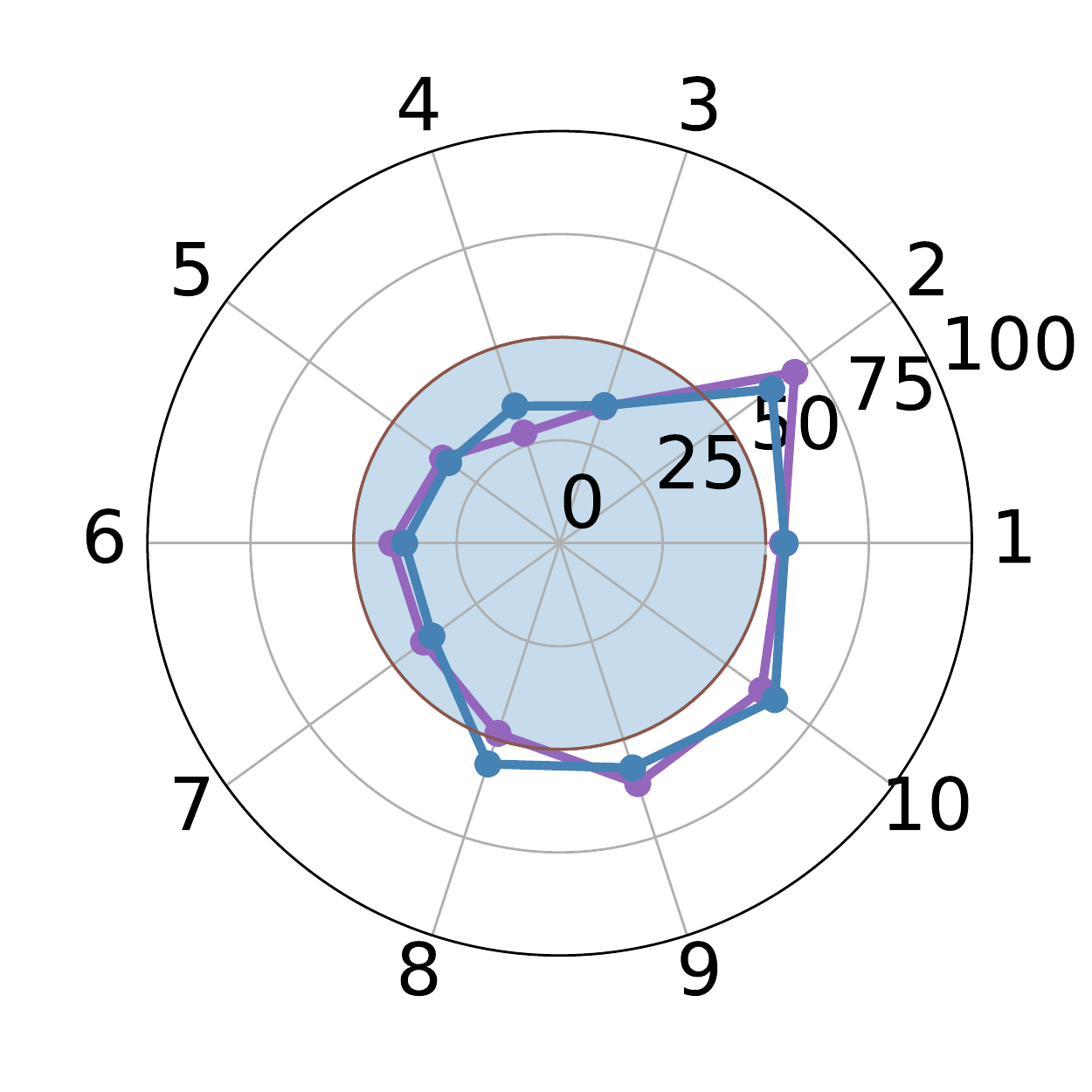}
    \label{fig:vs:rwrm}
    \end{minipage}
}
\caption{ Class-wise robust accuracy disparity of all methods using ResNet-18 on CIFAR-10. We compare our method and another method in terms of the class-wise robust accuracy evaluated under CW attack. We denote the results of our method with a blue line, while the results of the comparison methods are represented by a purple line.}
\label{fig:compare}

\end{figure*}

In this section, we conduct experiments on various datasets and models to evaluate the performance of our proposed method. Code is available at https://github.com/boqili/WAT.

\subsection{Datasets and Baselines}
The datasets used in the experiments are CIFAR-10 and CIFAR-100 \cite{cifar}, which are described in more detail in the Appendix.

\citet{trades}, \citet{frl} and \citet{analysis:benz} are used as our baselines. \textbf{TRADES} \cite{trades} is one of the most popular adversarial training methods. $\textbf{FRL}$ is presented in \citet{frl}. $\textbf{FRL}$ has two variants: $\textbf{FRL-RW}$ is based on the re-weight strategy, and $\textbf{FRL-RWRM}$ is based on the re-weight and re-margin strategy. $\textbf{Cost-sensitive Learning (CSL)}$ \cite{analysis:benz} is a classical approach to solving the class-imbalanced problem on imbalanced datasets \cite{csl:ting,csl:khan}. To be fair, we use the same hyper-parameters and perform the model selection for each method.

\subsection{Evaluations}
We use the following measures to evaluate the performance of all methods.

$\textbf{Average and Worst-class accuracy}$. Following \cite{frl}, we use average natural accuracy, average robust accuracy, worst-class natural accuracy and worst-class robust accuracy to evaluate the performance of all methods. We use three strong adversarial attacks PGD-100, CW\cite{cw} attack and AutoAttack\cite{aa} to evaluate robust accuracy. We set perturbation radius $\epsilon=8/255$ for CIFAR-10 and CIFAR-100. Other details can be found in the Appendix. 

$\textbf{Class-wise Variance ($CV$)}$. Class-wise variance is a common measure used in \cite{frl} and \cite{analysis:tian}. The definition of $CV$ given in \cite{analysis:tian} is presented below.
\begin{definition} \cite{analysis:tian}
Given one dataset containing C classes, the accuracy of each class $c$ is $a_{c}$, the average accuracy over all class is $\bar{a} \!=\! \frac{1}{C}\sum_{c=1}^{C} a_{c}$, and the $CV$ is defined as:  $CV \!=\! \frac{1}{c} \sum^{C}_{c=1}(a_{c} - \bar{a})^2$.

\end{definition}

 We use $CV_{nat}$ to denote the class-wise variance of natural accuracy and $CV_{rob}$ to denote the class-wise variance of robustness accuracy, We also use $\rho$ as defined in Eq.(\ref{eq:rho}) to evaluate the method in terms of both the average and worst-class accuracies.

\subsection{Results}

In Table \ref{tab:main}, we report the performance of every method using ResNet-18 on CIFAR-10 and CIFAR-100. We can clearly observe that our method successfully outperforms other methods on both CIFAR-10 and CIFAR-100. More specifically, under PGD-100 attack, our method improves the worst-class robust accuracy of all compared methods by at least 2.8\% on the CIFAR-10 dataset and 1.0\%  on the CIFAR-100 dataset, while improving the worst-class robust accuracy of all compared methods for at least 2.0\%  on CIFAR-10 dataset and 1.0\%  on CIFAR-100 under CW attack. Under AutoAttack, our method improves the worst-class robust accuracy of all compared methods by at least 0.9\% on the CIFAR-10 dataset and 1.0\%  on the CIFAR-100 dataset as well. Moreover, compared with TRADES, although all compared methods increase the robust accuracy, our method achieves the best $\rho_{pgd}$, $\rho_{cw}$ and $\rho_{AA}$ value; in short, we sacrifice the least average robust accuracy to obtain the highest worst-class robust accuracy. 

\begin{table*}[ht]

\caption{Comparison results of all methods using WideResNet-34-10 on CIFAR-10.}

\begin{center}
\setlength{\tabcolsep}{2mm}{
\begin{tabular}{lccccccccccccc}
\toprule
CIFAR-10 & \multicolumn{3}{c}{Natural} & \multicolumn{3}{c}{PGD-100}  & \multicolumn{3}{c}{CW}  & \multicolumn{3}{c}{AutoAttack}\\
\midrule
Method & Avg. & Wst. & $\rho_{nat}$ & Avg. & Wst. & $\rho_{pgd}$  & Avg. & Wst.  & $\rho_{cw}$  & Avg. & Wst. & $\rho_{AA}$\\
\midrule
TRADES       & \textbf{84.51}  &  64.7 & 0 & \textbf{53.68} & 23.3 & 0 & \textbf{53.18} & 22.8 & 0  & \textbf{51.22} & 20.9  & 0\\
FRL-RW     & 83.93 & 74.5 & \textbf{0.145} & 50.59  & 30.0 & 0.230  & 50.58 & 29.1 & 0.227 & 48.36 & 27.1 & 0.241 \\
FRL-RWRM     & 83.86 & 72.1 & 0.107 & 51.25  & 32.9 & 0.367 & 51.08 & 32.2 & 0.373 & 48.98 & 28.6 & 0.325 \\
CSL         & 79.78 & \textbf{75.1} & 0.105 & 45.7 & 32.2 & 0.233 & 44.74 & 30.8 & 0.192 & 43.10 & 29.4 & 0.248 \\
\midrule
Ours        & 83.71 & 74.0 & 0.062 & 51.53 & \textbf{34.9} & \textbf{0.458} & 50.89 & \textbf{33.4} & \textbf{0.422} & 49.12 & \textbf{30.7} & \textbf{0.428} \\
\bottomrule
\end{tabular}
}
\end{center}
\label{tab:wrn}

\end{table*}

\begin{table*}[h]
\caption{ Results of our method with different $\eta$ using ResNet-18 on CIFAR-10.}

\begin{center}
\setlength{\tabcolsep}{2mm}{

\begin{tabular}{lccccccccccccc}
\toprule
CIFAR-10 & \multicolumn{3}{c}{Natural} & \multicolumn{3}{c}{PGD-100}  & \multicolumn{3}{c}{CW}  & \multicolumn{3}{c}{AutoAttack}\\
\midrule
Method & Avg. & Wst. & $\rho_{nat}$ & Avg. & Wst. & $\rho_{pgd}$  & Avg. & Wst.  & $\rho_{cw}$  & Avg. & Wst. & $\rho_{AA}$\\
\midrule
TRADES      & \textbf{82.11}  &  64.6 & 0  & \textbf{51.69} & 25.2 & 0 & \textbf{50.38} & 24.1 & 0 & \textbf{48.64} & 21.7 & 0\\
Ours($\eta$=0.01)  & 81.54 & 68.0 & 0.046 & 50.50 & 26.6 & 0.033 & 49.86 & 25.0 & 0.027 & 47.65 & 22.6 & 0.021 \\
Ours($\eta$=0.05)    & 81.76 & 69.3 & \textbf{0.068} & 50.06 & 34.2 & 0.326 & 49.53 & 31.7 & 0.298 & 47.05 & 28.1 & 0.262 \\
Ours($\eta$=0.1)    & 80.98 & \textbf{69.5} & 0.062 & 49.13 & 36.6 & 0.403 & 47.57 & \textbf{33.3} & \textbf{0.326} & 46.04 & 30.1 & 0.334 \\
Ours($\eta$=0.5)    & 79.30 & 67.3 & 0.008 & 48.09 & \textbf{37.5} & \textbf{0.418} & 45.42 & 32.5 & 0.250 & 43.98 & \textbf{31.1} & \textbf{0.337} \\
\bottomrule
\end{tabular}
}
\end{center}
\label{tab:eta}
\end{table*}

Furthermore, to study the effectiveness of our method in more detail, we conduct a comparison of the class-wise robust accuracy evaluated under CW attack between our method and all compared methods in Figure \ref{fig:compare}. As shown in Figure \ref{fig:vs:trades}, our method achieves higher robust accuracy of class-4 and class-5 than TRADES, thus, our method obtains a good performance on worst-class robust accuracy. In Figure \ref{fig:vs:csl}, although CSL achieves a great performance on the worst class, it performs worse than our method on most other classes, which leads to a low average robust accuracy. From Figures \ref{fig:vs:rw} and \ref{fig:vs:rwrm}, we can see that our method achieves higher robust accuracy on class-4 (the most vulnerable class) than the other two baselines. Moreover, our proposed method significantly outperforms the other two baselines on class-5 and class-8, which contributes to the highest $\rho_{cw}$ of our method. The results of class-wise robust accuracy disparity of all the methods evaluated under PGD-100 attack and AutoAttack on CIFAR-10 can be found in the Appendix.

We go on to evaluate the performance of all the methods on WideResNet-34-10\cite{wideresnet}. The experimental results can be found in Table \ref{tab:wrn}. From the results in  Table \ref{tab:wrn}, we can find that our method achieves the highest worst-class robust accuracy evaluated under all three attacks with at least 1.3\% improvement. we also achieve the highest $\rho_{pgd}$, $\rho_{cw}$ and $\rho_{AA}$ while we have comparable result with compared methods in average robust accuracy evaluated under all three attacks on CIFAR-10.

\subsection{Parameter Analysis on $\eta$}

We study the impact of hyper-parameter $\eta$ used in our method on average and worst-class robust accuracy. We vary the hyper-parameter $\eta$ from \{0.01,0.05,0.1,0.5\}, and show the results in Table \ref{tab:eta}. We find that a trade-off between the average robust accuracy and the worst-class robust accuracy exists, and if we improve the average robust accuracy, the worst-class robust accuracy decreases at the same time. However, a larger $\eta$ does not lead to a larger $\rho_{nat}$ and $\rho_{cw}$ . In our experiments, we find $\eta=0.1$ yields the best $\rho_{nat}$ and $\rho_{cw}$ while $\eta=0.5$ yields the best $\rho_{pgd}$ and $\rho_{AA}$.

\begin{table}[h]
\caption{Comparison results between $CV_{cw}$ and $\rho_{cw}$ using ResNet-18 on CIFAR-10.}
\label{tab:variance}
\begin{center}
\setlength{\tabcolsep}{2mm}{
\begin{tabular}{lcccc}

\toprule
CIFAR-10 & \multicolumn{4}{c}{CW Attack} \\
\midrule
Method & Avg. & Wst. & $CV_{cw}$ & $\rho_{cw}$ \\
\midrule
TRADES      & \textbf{50.38} & 24.1 & 0.0269  & 0  \\
\midrule
FRL-RW      & 47.80 & 27.8 & 0.0215  & 0.102  \\
FRL-RWRM    & 47.45 & 28.1 & 0.0172  & 0.108  \\
CSL         & 41.60 & 31.3 & \textbf{0.0027}  & 0.124  \\
Ours        & 47.57 & \textbf{33.3} & 0.0147  & \textbf{0.326}  \\
\bottomrule
\end{tabular}
}
\end{center}
\label{tab:rho}

\end{table}

\subsection{Comparison between $CV$ and $\rho$}
\label{sec:rho}

From the results in Table \ref{tab:variance}, we can see that CSL obtains the lowest $CV_{cw}$ value, while the average robust accuracy of CSL is the worst. Notably, $CV_{cw}$ is not a good measurement because it does not consider the trade-off between average and worst-class robust accuracy. From the results in Table \ref{tab:variance}, we can also see that our method achieves the best $\rho_{cw}$, has the highest worst-class robust accuracy, and is comparable with FRL and CSL in average robust accuracy. Therefore, $\rho_{cw}$ is a more reasonable measurement than $CV_{cw}$ because it considers average robust accuracy and worst-class robust accuracy at the same time. The results evaluated under PGD-100 attack and AutoAttack are shown in the Appendix.

\section{Conclusion}
\label{conclusion}
To improve the worst-class robustness in adversarial training, this paper proposes a novel framework of worst-class adversarial training and leverages no-regret dynamics to solve the problem. Theoretically, we provide the guarantee of the worst-class loss and analyze the generalization error bound in terms of the worst-class robust risk based on Rademacher complexity. Moreover, we propose a measurement to evaluate the method in terms of both the average and worst-class accuracies. Empirical results verify the superiority of our proposed approach.

\section{Acknowledgments}
This work is supported by the National Natural Science Foundation of China under Grant 61976161.

\bibliography{aaai23}

\newpage
\appendix
\onecolumn

\section{Proof of Theorem}
\label{app:thm}

\subsection{Proof of Theorem \ref{thm:nr}}
\label{app:nr}

\begin{customthm}{1}
\label{appendix:thm:nr}
Assume the range of $L^{val}(f)$ is $[0,1]$, and $1/T \sum_{t=1}^{T} L^{val}_{k}(f^{t}) \geq 1/(1-\eta) \min_{t} L^{val}_{k}(f^{t}) $ for every $k$ and some $\eta \leq 1/2$. We then have
\begin{equation}
    \max_{k} \min_{t} L^{val}_{k}(f^{t}) \leq \frac{1}{T} \sum^{T}_{t=1} \sum^{K}_{k=0} w^{t}_{k}L^{val}_{k}(f^{t}) + \frac{\log (K+1)}{T\eta}. \label{appendix:weight}
\end{equation}
\end{customthm}

We have this no-regret bound as a Lemma from \cite{mwua}.
\begin{lemma}\label{lemma:nr}
Assume that all cost $C^{t}_{i} \in  [-1,1]$ and $\eta \leq 1/2$. Then the Multiplicative Weights algorithm guarantees that after T rounds, for any $k$, we have
\begin{equation*}
    \sum^{T}_{t=1} \sum^{K}_{k=1} C^{t}_{k} \cdot p^{t}_{k} \geq \sum_{t=1}^{T} C^{t}_{k} -\eta \sum_{t=1}^{T} \vert C^{t}_{k} \vert - \frac{\log K}{\eta}.
\end{equation*}
\end{lemma}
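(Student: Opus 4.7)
The plan is to run the classical potential-function argument for the Hedge/Multiplicative Weights update. Introduce the unnormalized weights $W^t_k = \exp\!\bigl(\eta \sum_{i=1}^{t-1} C^i_k\bigr)$ with $W^1_k = 1$, and the potential $\Phi_t = \sum_{k=1}^K W^t_k$, so that $p^t_k = W^t_k / \Phi_t$ and $\Phi_1 = K$. The whole proof is then a sandwich argument on the telescoping identity $\log \Phi_{T+1} - \log \Phi_1 = \sum_{t=1}^T \log(\Phi_{t+1}/\Phi_t)$: an upper bound in terms of the algorithm's weighted plays, and a lower bound in terms of any fixed expert.

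For the per-round upper bound, note $\Phi_{t+1}/\Phi_t = \sum_k p^t_k \exp(\eta C^t_k)$. The key one-variable estimate is $\exp(\eta x) \le 1 + \eta x + \eta^2 |x|$ for all $x \in [-1,1]$ whenever $\eta \le 1/2$; this follows by splitting on the sign of $x$ and using $e^\eta - 1 \le \eta + \eta^2$ on $[0,1]$ together with $e^{-\eta|x|} \le 1 - \eta|x| + \tfrac{1}{2}(\eta|x|)^2$ on $[-1,0]$. Averaging over $k$ and applying $\log(1+y) \le y$ yields $\log(\Phi_{t+1}/\Phi_t) \le \eta \langle p^t, C^t\rangle + \eta^2 \langle p^t, |C^t|\rangle$. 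For the matching lower bound, fix any index $k$ and observe that $\Phi_{T+1} \ge W^{T+1}_k = \exp(\eta \sum_t C^t_k)$, so $\log \Phi_{T+1} \ge \eta \sum_t C^t_k$.

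Combining the two bounds on $\log \Phi_{T+1}$, subtracting $\log \Phi_1 = \log K$, and dividing by $\eta$ yields $\sum_t \langle p^t, C^t\rangle \ge \sum_t C^t_k - \eta \sum_t \langle p^t, |C^t|\rangle - (\log K)/\eta$. The remaining step — producing exactly the stated regret term $\eta \sum_t |C^t_k|$ for the fixed comparator instead of $\eta \sum_t \langle p^t, |C^t|\rangle$ for the algorithm's average — is the main technical obstacle, and it is the point at which the version of MW assumed in \cite{mwua} differs from raw Hedge. I would resolve it by reworking the argument with the linearized update $W^{t+1}_k = W^t_k(1 - \eta C^t_k)$, which agrees with Hedge to first order: the inequality $\ln(1 - \eta x) \ge -\eta x - \eta^2 x^2$ for $|\eta x| \le 1/2$ shifts the quadratic error onto the fixed comparator, so that the lower bound on $\log \Phi_{T+1}$ picks up $-\eta \sum_t C^t_k - \eta^2 \sum_t |C^t_k|$, while the upper bound loses its quadratic term entirely. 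Substituting and dividing by $\eta$ then recovers the lemma exactly; the potential-function skeleton and the one-variable exponential inequality are standard, and the delicate part is simply tracking which side of the inequality the quadratic correction lands on.
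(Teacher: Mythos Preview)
The paper does not prove this lemma at all; it simply imports it from the Arora--Hazan--Kale survey \cite{mwua} and uses it as a black box in the proof of Theorem~\ref{thm:nr}. So there is no paper proof to compare against, and your potential-function argument is precisely the standard derivation one finds in that reference.

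Your sketch is correct in outline but contains a sign slip at the decisive final step. The Hedge update used in this paper is $p^t_k \propto \exp\bigl(+\eta\sum_{i<t}C^i_k\bigr)$, i.e.\ weights \emph{increase} with accumulated $C$ (the gain convention). Its first-order linearization is therefore $W^{t+1}_k = W^t_k(1+\eta C^t_k)$, not $(1-\eta C^t_k)$. With the sign you wrote, the sandwich collapses to
\[
\sum_t \langle p^t, C^t\rangle \;\le\; \sum_t C^t_k + \eta\sum_t |C^t_k| + \frac{\log K}{\eta},
\]
which is the loss-minimization guarantee and points the wrong way for the stated lemma. Flipping the sign, the per-round upper bound becomes $\log(\Phi_{t+1}/\Phi_t)=\log(1+\eta\langle p^t,C^t\rangle)\le \eta\langle p^t,C^t\rangle$, while the comparator lower bound becomes $\log W^{T+1}_k=\sum_t\log(1+\eta C^t_k)\ge \eta\sum_t C^t_k-\eta^2\sum_t|C^t_k|$ via $\ln(1+x)\ge x-x^2$ for $x\ge -1/2$ (which holds since $\eta\le 1/2$ and $|C^t_k|\le 1$). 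Combining and dividing by $\eta$ now gives exactly the lemma. Everything else in your plan---the telescoping identity, the one-variable exponential estimates, and in particular the observation that the linearized update rather than raw Hedge is what places the quadratic correction on the comparator side---is correct.
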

Now we prove Theorem \ref{appendix:thm:nr}.
\begin{proof}
From Lemma \ref{lemma:nr}, for every $k$,  we have
\begin{equation}
    \sum^{T}_{t=1} L^{val}_{k}(f^{t}) - \eta \sum^{T}_{t=1} \vert L^{val}_{k}(f^{t}) \vert \leq \sum^{T}_{t=1} \sum^{K}_{k=0} w^{t}_{k} L^{val}_{k}(f^{t}) + \frac{\log (K+1)}{\eta}.
\end{equation}
Use the assumption that the range of $L(f)$ is $[0,1]$, for every $k$ we can yield
\begin{equation}  \label{thm3:ineq1}
    (1-\eta)\sum^{T}_{t=1}L^{val}_{k}(f^{t}) \leq \sum^{T}_{t=1} \sum^{K}_{k=0} w^{t}_{k}L^{val}_{k}(f^{t}) + \frac{\log (K+1)}{\eta}.
\end{equation}

Because inequation (\ref{thm3:ineq1}) holds for every $k$, with the assumption that $1/T \sum_{t=1}^{T} L^{val}_{k}(f^{t}) \geq 1/(1-\eta) \min_{t} L^{val}_{k}(f^{t}) $ holds for every $k$ and some $\eta \leq 1/2$, we can yield
\begin{equation}
    \min_{t} L^{val}_{k}(f^{t}) \leq \frac{1-\eta}{T}  \sum^{T}_{t=1}L^{val}_{k}(f^{t}) \leq \frac{1}{T} \sum^{T}_{t=1} \sum^{K}_{k=0} w^{t}_{k}L^{val}_{k}(f^{t}) + \frac{\log (K+1)}{T\eta},
\end{equation}
for every $k$. Thus we have
\begin{equation}
    \max_k \min_{t} L^{val}_{k}(f^{t}) \leq \frac{1}{T} \sum^{T}_{t=1} \sum^{K}_{k=0} w^{t}_{k}L^{val}_{k}(f^{t}) + \frac{\log (K+1)}{T\eta}.
\end{equation}

We conclude this proof.
\end{proof}

\subsection{Proof of Theorem \ref{thm:rad}}
\label{app:rad}

\begin{customthm}{2}
\label{appendix:thm:rad}
 Suppose that the range of $\ell(f(\mathbf{x}), y)$ is $[0,B]$. Let $\tilde \ell (f(\mathbf{x}),y) := \max_{\mathbf{x}^{\prime} \in \mathcal{B}(\mathbf{x},\epsilon)} \ell (f(\mathbf{x}^{\prime}),y)$. Then, for any $\delta \in (0,1)$, with probability at least $1-\delta$, the following holds for all $f\in \mathcal{F}$,
\[
\mathcal{ R}^{rob}_{wc}(f) \le \mathcal{ \hat R}^{rob}_{wc}(f) + 2B\max_{k}\mathfrak{R}_{\mathcal{S}_{k}}( \tilde \ell_{\mathcal{F}}) + 3B\sqrt{\frac{K\log \frac{2}{\delta}}{2|\mathcal{S}|}}.
\]
\end{customthm}

To prove Theorem \ref{appendix:thm:rad}, we need this following lemma.
\begin{lemma}\label{lemma:rad} \cite{theory:yin}
 Suppose that the range of $\ell(f(x), y)$ is $[0,B]$. Let $\tilde \ell (f(x),y) := max_{x' \in \mathcal{B}(x,\epsilon)} \ell (f(x'),y)$. Then, for any $\delta \in (0,1)$, with probability at least $1-\delta$, the following holds for all $f\in \mathcal{F}$,
\[
\mathcal{ R}^{rob}(f) \le \mathcal{ \hat R}^{rob}(f) + 2B\mathfrak{R}_{\mathcal{S}}( \tilde \ell_{\mathcal{F}}) + 3B\sqrt{\frac{\log \frac{2}{\delta}}{2\vert \mathcal{S} \vert }}.
\]
\end{lemma}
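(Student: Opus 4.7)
The plan is to reduce the worst-class bound to a uniform-in-$k$ version of Lemma \ref{lemma:rad} via the standard sub-max trick, then stitch the per-class bounds together with a union bound. The starting observation is the purely deterministic inequality
\[
\mathcal{R}^{rob}_{wc}(f) - \hat{\mathcal{R}}^{rob}_{wc}(f) = \max_k \mathcal{R}^{rob}_k(f) - \max_k \hat{\mathcal{R}}^{rob}_k(f) \le \max_k \bigl[\mathcal{R}^{rob}_k(f) - \hat{\mathcal{R}}^{rob}_k(f)\bigr],
\]
which holds pointwise in $f$ and reduces the worst-class generalization problem to controlling all $K$ per-class gaps simultaneously.

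Next, I would apply Lemma \ref{lemma:rad} separately to each class $k$. Because $\mathcal{S}_k$ is i.i.d.\ from $\mathcal{D}_k$ and $\tilde\ell$ takes values in $[0,B]$, running the lemma with confidence parameter $\delta/K$ yields, with probability at least $1-\delta/K$ over the draw of $\mathcal{S}_k$,
\[
\mathcal{R}^{rob}_k(f) \le \hat{\mathcal{R}}^{rob}_k(f) + 2B\,\mathfrak{R}_{\mathcal{S}_k}(\tilde\ell_{\mathcal{F}}) + 3B\sqrt{\frac{\log(2K/\delta)}{2|\mathcal{S}_k|}} \quad \text{for all } f \in \mathcal{F}.
\]
A union bound over $k \in [K]$, exploiting the independence of the $\mathcal{S}_k$'s, promotes this to a simultaneous statement for every class and every $f$ with probability at least $1-\delta$. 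Substituting the balanced-split assumption $|\mathcal{S}_k| = |\mathcal{S}|/K$ extracts a $\sqrt{K}$ from the denominator, and combining with the sub-max inequality of the first step gives
\[
\mathcal{R}^{rob}_{wc}(f) \le \hat{\mathcal{R}}^{rob}_{wc}(f) + 2B \max_k \mathfrak{R}_{\mathcal{S}_k}(\tilde\ell_{\mathcal{F}}) + 3B\sqrt{\frac{K\log(2K/\delta)}{2|\mathcal{S}|}},
\]
which matches the claimed bound, up to a $\log K$ factor inside the square root that the printed statement appears to absorb into the constant (or treat as suppressed).

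The main obstacle is conceptual rather than technical: one must resist the temptation to run a single Rademacher-style argument on the combined sample against a max-over-classes loss, and instead recognize that the sub-max inequality fully decouples the $K$ classes, letting Lemma \ref{lemma:rad}'s uniform-in-$f$ guarantee be invoked one class at a time. The only remaining care is bookkeeping: dividing the failure budget $\delta$ equally among the $K$ classes (this is the source of the $\log K$ factor) and using $|\mathcal{S}_k| = |\mathcal{S}|/K$ to convert the per-class $1/|\mathcal{S}_k|$ into the $K/|\mathcal{S}|$ needed to produce the $\sqrt{K}$ in the final bound.
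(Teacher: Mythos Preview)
Your argument does not address the stated lemma. Lemma~\ref{lemma:rad} concerns the \emph{average} robust risk $\mathcal{R}^{rob}(f)$, not the worst-class risk, and the paper does not prove it at all: it is quoted verbatim from \cite{theory:yin} as the standard Rademacher-complexity generalization bound applied to the surrogate loss $\tilde\ell$. A proof of Lemma~\ref{lemma:rad} would be the usual symmetrization plus bounded-differences (McDiarmid) argument for the function class $\tilde\ell_{\mathcal F}$; nothing about per-class decomposition or maxima over $k$ enters.

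What you have written is a proof of Theorem~\ref{thm:rad}, which \emph{uses} Lemma~\ref{lemma:rad} as a black box. For that target your route is the same as the paper's---apply the lemma to each $\mathcal{S}_k$, take maxima, invoke $\max_k(a_k+b_k)\le\max_k a_k+\max_k b_k$, and substitute $|\mathcal{S}_k|=|\mathcal{S}|/K$. The one substantive difference is that you split the failure probability as $\delta/K$ before the union bound, which yields the extra $\log K$ you flag; the paper instead applies Lemma~\ref{lemma:rad} with parameter $\delta$ to every class and then maximizes, which is why its stated bound carries only $\log(2/\delta)$. Your bookkeeping is the more rigorous of the two, but it is bookkeeping for a different theorem than the one you were asked to prove.
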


Now we prove Theorem \ref{appendix:thm:rad}.

\begin{proof}
According to Lemma \ref{lemma:rad} , we have following results for every $k$ under the same assumption.
\begin{equation}
\mathcal{ R}^{rob}_k(f) \le \mathcal{ \hat R}^{rob}_k(f) + 2B\mathfrak{R}_{\mathcal{S}_k}( \tilde \ell_{\mathcal{F}}) + 3B\sqrt{\frac{\log \frac{2}{\delta}}{2\vert \mathcal{S}_k \vert}}.
\end{equation}
Take the maximum values with $k$ at left hand and right hand of the inequation respectively, we have
\begin{equation}
\max_k \mathcal{ R}^{rob}_k(f) \le \max_k \left[\mathcal{ \hat R}^{rob}_k(f) + 2B\mathfrak{R}_{\mathcal{S}_k}( \tilde \ell_{\mathcal{F}}) + 3B\sqrt{\frac{\log \frac{2}{\delta}}{2|\mathcal{S}_k|}}\right].
\end{equation}
Use equation (\ref{ineq:1})
\begin{equation} \label{ineq:1}
\max_x (f(x) + g(x)) \leq \max_x f(x) + \max_x g(x),
\end{equation}
we have
\begin{equation}
\mathcal{ R}^{rob}_{wc}(f) \le \mathcal{ \hat R}^{rob}_{wc}(f) + 2B\max_{k}\mathfrak{R}_{\mathcal{S}_{k}}( \tilde \ell_{\mathcal{F}}) + 3B\sqrt{\frac{K\log \frac{2}{\delta}}{2|\mathcal{S}|}}.
\end{equation}
We conclude this proof.
\end{proof}

\subsection{Proof of Theorem \ref{thm:lin}}
\label{app:lin}

\begin{customthm}{3}\label{appendix:thm:lin}
Consider the multi-class linear classifiers in the adversarial setting, and suppose that $\frac{1}{p} + \frac{1}{q} = 1$, $p,q \ge 1$. For any fixed $\gamma >0$ and $W > 0$, we have with probability at least $1-\delta$, for all $\mathbf{W}$ such that $\normvec{\mathbf{W}^\top}_{p,\infty} \le W$,
\begin{align*}
1 - Acc^{rob}_{wc}(f,\mathcal{D})
\leq \frac{K}{\vert \mathcal{S} \vert } \sum_{(x_i,y_i) \in \mathcal{S}}  E_{i}
+ \frac{2 W K^3}{\gamma \vert \mathcal{S} \vert} U + c,
\end{align*}
where
\begin{equation}
\begin{aligned}
&E_{i}\!=\!\mathbbm{1}\!\left(\! \left\langle\mathbf{w}_{y_{i}}\!,\! \mathbf{x}_{i}\right\rangle \! \leq \! \gamma \!+ \! \max _{y^{\prime} \neq y_{i}}\left(\left\langle\mathbf{w}_{y^{\prime}}, \mathbf{x}_{i}\right\rangle \! + \!\epsilon \normvec{\mathbf{w}_{y^{\prime}}\!\!-\!\!\mathbf{w}_{y_{i}}}_{1}\right)\!\right), \\
&U\!=\! \max_{y,k} \mathbb{E}_{\boldsymbol{\sigma}}\left[\left\| \sum_{(\mathbf{x}_i,y_i) \in \mathcal{S}_k} \sigma_{i} \mathbf{\mathbf{x}}_{i} \mathbbm{1}\left(y_{i}=y\right)\right\|_{q}\right], \\
&c \!=\! \frac{2 W K^2 \epsilon d^{\frac{1}{q}}}{\gamma \sqrt{ \vert \mathcal{S} \vert }}  + 3 \sqrt{\frac{K \log \frac{2}{\delta}}{2 \vert \mathcal{S} \vert }}.
\end{aligned}
\end{equation}
\end{customthm}

To prove Theorem \ref{appendix:thm:lin}, we need this following lemma.
\begin{lemma}\label{lemma:multi-class:rad} \cite{theory:yin}
Consider the multi-class linear classifiers in the adversarial setting, and suppose that $\frac{1}{p} + \frac{1}{q} = 1$, $p,q \ge 1$. For any fixed $\gamma >0$ and $W > 0$, we have with probability at least $1-\delta$, for all $\mathbf{W}$ such that $||\mathbf{W}^\top||_{p,\infty} \le W$,

\begin{align*}
& \mathbb{P}_{(\mathbf{x}, y) \sim \mathcal{D}}\left\{\exists \mathbf{x}^{\prime} \in \mathcal{B}(\mathbf{x},\epsilon), \text { s.t. } y \neq \arg \max _{y^{\prime} \in[K]}\left\langle\mathbf{w}_{y^{\prime}}, \mathbf{x}\right\rangle\right\} \\
\leq & \frac{1}{n} \sum_{i=1}^{n} \mathbbm{1}\left(\left\langle\mathbf{w}_{y_{i}}, \mathbf{x}_{i}\right\rangle \leq \gamma+\max _{y^{\prime} \neq y_{i}}\left(\left\langle\mathbf{w}_{y^{\prime}}, \mathbf{x}_{i}\right\rangle+\epsilon\left\|\mathbf{w}_{y^{\prime}}-\mathbf{w}_{y_{i}}\right\|_{1}\right)\right) \\
+ & \frac{2 W K}{\gamma}\left[\frac{\epsilon \sqrt{K} d^{\frac{1}{q}}}{\sqrt{n}}+\frac{1}{n} \sum_{y=1}^{K} \mathbb{E}_{\boldsymbol{\sigma}}\left[\left\|\sum_{i=1}^{n} \sigma_{i} \mathbf{x}_{i} \mathbbm{1}\left(y_{i}=y\right)\right\|_{q}\right] \right]+3 \sqrt{\frac{\log \frac{2}{\delta}}{2 n}}.
\end{align*}

\end{lemma}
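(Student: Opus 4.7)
The plan is to reduce Theorem \ref{thm:lin} to Lemma \ref{lemma:multi-class:rad} applied per class and then aggregate via a max over classes, exactly mirroring how Theorem \ref{thm:rad} is derived from its supporting lemma. First, note that by the definition of worst-class robust accuracy,
\begin{equation*}
1 - Acc^{rob}_{wc}(f,\mathcal{D}) = \max_{k \in [K]} \mathbb{P}_{(\mathbf{x},y)\sim\mathcal{D}_k}\bigl\{\exists \mathbf{x}' \in \mathcal{B}(\mathbf{x},\epsilon),\ y \neq \arg\max_{y' \in [K]} \langle \mathbf{w}_{y'}, \mathbf{x}'\rangle\bigr\},
\end{equation*}
since $1 - \min_k Acc^{rob}_k = \max_k(1 - Acc^{rob}_k)$.

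Next, apply Lemma \ref{lemma:multi-class:rad} to each class $k$ separately using the class-conditional sample $\mathcal{S}_k$, which is i.i.d.\ of size $n_k = |\mathcal{S}|/K$ drawn from $\mathcal{D}_k$. Invoke the lemma with confidence parameter $\delta/K$; a union bound over the $K$ classes then ensures that, with probability at least $1-\delta$, the per-class bound holds simultaneously for every $k$. Substituting $n_k = |\mathcal{S}|/K$ yields $1/n_k = K/|\mathcal{S}|$ and $1/\sqrt{n_k} = \sqrt{K/|\mathcal{S}|}$, which transforms the three RHS terms of the lemma into $|\mathcal{S}|$-scaled expressions: the empirical indicator term becomes $\frac{K}{|\mathcal{S}|}\sum_{(\mathbf{x}_i,y_i) \in \mathcal{S}_k} E_i$; the $\epsilon$-dependent summand becomes $\frac{2WK^2 \epsilon d^{1/q}}{\gamma \sqrt{|\mathcal{S}|}}$; the Rademacher sum acquires the prefactor $\frac{2WK^2}{\gamma |\mathcal{S}|}$; and the confidence term becomes $3\sqrt{K\log(2K/\delta)/(2|\mathcal{S}|)}$, which the theorem simplifies to $3\sqrt{K\log(2/\delta)/(2|\mathcal{S}|)}$ in $c$ by absorbing a $\log K$ factor into the existing $\sqrt{K}$.

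To finish, take $\max_k$ of both sides and apply the sub-additivity of $\max$ (inequality (\ref{ineq:1}), used also in the proof of Theorem \ref{thm:rad}) to distribute it across the three RHS terms. The $\epsilon$-dependent and confidence constants are $k$-independent and assemble into $c$. For the empirical term, non-negativity of $E_i$ gives $\max_k \sum_{(\mathbf{x}_i,y_i)\in \mathcal{S}_k} E_i \leq \sum_{(\mathbf{x}_i,y_i)\in \mathcal{S}} E_i$, producing $\frac{K}{|\mathcal{S}|}\sum_{(\mathbf{x}_i,y_i)\in \mathcal{S}} E_i$. For the Rademacher term, first bound the inner sum $\sum_{y=1}^K \mathbb{E}_{\boldsymbol{\sigma}}[\|\cdot\|_q] \leq K \cdot \max_y \mathbb{E}_{\boldsymbol{\sigma}}[\|\cdot\|_q]$, then the outer $\max_k$ extracts the joint $\max_{y,k}$, giving exactly $\frac{2WK^3}{\gamma |\mathcal{S}|} U$. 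Collecting the three pieces gives the claimed bound.

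The main obstacle here is careful bookkeeping rather than any new technique: the three distinct powers $K$, $K^2$, $K^3$ in the final bound each arise from a different step (converting $n_k$ to $|\mathcal{S}|/K$, bounding $\sum_{y=1}^K$ by $K \max_y$, and taking the outer $\max_k$ over the class-wise Rademacher terms), so one must track each $K$-factor to arrive at the stated exponents. A secondary issue is the $\log K$ gap between the union-bound $\log(2K/\delta)$ and the $\log(2/\delta)$ appearing in $c$; this should be acknowledged and absorbed into the constant, which is the standard convention when the number of classes is fixed.
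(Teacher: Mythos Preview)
You have not proved the stated result. Lemma~\ref{lemma:multi-class:rad} is a cited result from \cite{theory:yin}; the paper does not prove it, but simply invokes it as a black box in the proof of Theorem~\ref{thm:lin}. Your proposal, by your own description, is a proof of Theorem~\ref{thm:lin} \emph{assuming} Lemma~\ref{lemma:multi-class:rad}, not a proof of the lemma itself. A proof of the lemma would require bounding the adversarial Rademacher complexity of the multi-class linear class (via the margin operator, the structure of $\ell_p$-perturbations, and a covering or vector-contraction argument), none of which you attempt.

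If your intent was actually to prove Theorem~\ref{thm:lin}, then your argument matches the paper's almost step for step: apply the lemma per class, take $\max_k$, use sub-additivity of the max (the paper's inequality~(\ref{ineq:1})), bound the empirical indicator by summing over all of $\mathcal{S}$ via non-negativity, and replace $\sum_{y=1}^K$ by $K\max_y$ in the Rademacher term. The one substantive difference is that you correctly invoke the lemma with confidence $\delta/K$ and a union bound, yielding $\log(2K/\delta)$ in the last term, whereas the paper applies it with $\delta$ per class and writes $\log(2/\delta)$ directly; your version is the rigorous one, and the paper's stated constant $c$ omits the $\log K$ your approach would produce. But note that ``absorbing $\log K$ into the existing $\sqrt{K}$'' is not legitimate for general $K$ and $\delta$---it changes the bound, so you should either state the sharper constant or flag the discrepancy explicitly rather than hand-wave it away.
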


Now we prove Theorem \ref{appendix:thm:lin}.
\begin{proof}
According to Lemma \ref{lemma:multi-class:rad} , we have following results for every $k$ under the same assumption.

\begin{align} \label{lem:1}
& \mathbb{P}_{(\mathbf{x}, y) \sim \mathcal{D}_k}\left\{\exists \mathbf{x}^{\prime} \in \mathcal{B}(\mathbf{x},\epsilon), \text { s.t. } y \neq \arg \max _{y^{\prime} \in[K]}\left\langle\mathbf{w}_{y^{\prime}}, \mathbf{x}\right\rangle\right\} \notag \\
\leq & \frac{1}{|{\mathcal{S}_k}|}\sum_{(x_i,y_i) \in \mathcal{S}_k}  \mathbbm{1}\left(\left\langle\mathbf{w}_{y_{i}}, \mathbf{x}_{i}\right\rangle \leq \gamma+\max _{y^{\prime} \neq y_{i}}\left(\left\langle\mathbf{w}_{y^{\prime}}, \mathbf{x}_{i}\right\rangle+\epsilon\left\|\mathbf{w}_{y^{\prime}}-\mathbf{w}_{y_{i}}\right\|_{1}\right)\right)  \\
+ & \frac{2 W K}{\gamma}\left[\frac{\epsilon \sqrt{K} d^{\frac{1}{q}}}{\sqrt{|{\mathcal{S}_k}|}}+\frac{1}{|{\mathcal{S}_k}|} \sum_{y=1}^{K} \mathbb{E}_{\boldsymbol{\sigma}}\left[\left\| \sum_{(x_i,y_i) \in \mathcal{S}_k}  \sigma_{i} \mathbf{x}_{i} \mathbbm{1}\left(y_{i}=y\right)\right\|_{q}\right] \right]+3 \sqrt{\frac{\log \frac{2}{\delta}}{2 |{\mathcal{S}_k}|}}. \notag
\end{align}

Take the maximum values with $k$ at left hand and right hand of (\ref{lem:1}) respectively and use (\ref{ineq:1}), we can yield
\begin{align} \label{thm2:1}
& \max_k \mathbb{P}_{(\mathbf{x}, y) \sim \mathcal{D}_k}\left\{\exists \mathbf{x}^{\prime} \in \mathcal{B}(\mathbf{x},\epsilon), \text { s.t. } y \neq \arg \max _{y^{\prime} \in[K]}\left\langle\mathbf{w}_{y^{\prime}}, \mathbf{x}\right\rangle\right\} \notag \\
\leq & \max_k \left[ \frac{1}{|\mathcal{S}_k|}\sum_{(x_i,y_i) \in \mathcal{S}_k}  \mathbbm{1}\left(\left\langle\mathbf{w}_{y_{i}}, \mathbf{x}_{i}\right\rangle \leq \gamma+\max _{y^{\prime} \neq y_{i}}\left(\left\langle\mathbf{w}_{y^{\prime}}, \mathbf{x}_{i}\right\rangle+\epsilon\left\|\mathbf{w}_{y^{\prime}}-\mathbf{w}_{y_{i}}\right\|_{1}\right)\right) \right] \\
+ & \max_k \left[ \frac{2 W K}{\gamma}\left[\frac{\epsilon \sqrt{K} d^{\frac{1}{q}}}{\sqrt{|\mathcal{S}_k|}}+\frac{1}{|\mathcal{S}_k|} \sum_{y=1}^{K} \mathbb{E}_{\boldsymbol{\sigma}}\left[\left\| \sum_{(x_i,y_i) \in \mathcal{S}_k}  \sigma_{i} \mathbf{x}_{i} \mathbbm{1}\left(y_{i}=y\right)\right\|_{q}\right] \right]+3 \sqrt{\frac{\log \frac{2}{\delta}}{2 |\mathcal{S}_k|}}  \right]. \notag
\end{align}
Because for every $k$ we have
\begin{equation}
    \sum_{(x_i,y_i) \in \mathcal{S}_k}  \mathbbm{1}\left(\left\langle\mathbf{w}_{y_{i}}, \mathbf{x}_{i}\right\rangle \leq \gamma+\max _{y^{\prime} \neq y_{i}}\left(\left\langle\mathbf{w}_{y^{\prime}}, \mathbf{x}_{i}\right\rangle+\epsilon\left\|\mathbf{w}_{y^{\prime}}-\mathbf{w}_{y_{i}}\right\|_{1}\right)\right) \geq 0.
    \label{non-neg}
\end{equation}
(\ref{non-neg}) implies that
\begin{align}  \label{thm2:2}
& \max_k \left[ \frac{1}{|\mathcal{S}_k|}\sum_{(x_i,y_i) \in \mathcal{S}_k}  \mathbbm{1}\left(\left\langle\mathbf{w}_{y_{i}}, \mathbf{x}_{i}\right\rangle \leq \gamma+\max _{y^{\prime} \neq y_{i}}\left(\left\langle\mathbf{w}_{y^{\prime}}, \mathbf{x}_{i}\right\rangle+\epsilon\left\|\mathbf{w}_{y^{\prime}}-\mathbf{w}_{y_{i}}\right\|_{1}\right)\right) \right] \notag \\
\leq &   \max_k\frac{1}{|\mathcal{S}_k|} \sum_{(x_i,y_i) \in \mathcal{S}}  \mathbbm{1}\left(\left\langle\mathbf{w}_{y_{i}}, \mathbf{x}_{i}\right\rangle \leq \gamma+\max _{y^{\prime} \neq y_{i}}\left(\left\langle\mathbf{w}_{y^{\prime}}, \mathbf{x}_{i}\right\rangle+\epsilon\left\|\mathbf{w}_{y^{\prime}}-\mathbf{w}_{y_{i}}\right\|_{1}\right)\right) \notag\\
= & \frac{K}{|\mathcal{S}|}\sum_{(x_i,y_i) \in \mathcal{S}}  \mathbbm{1}\left(\left\langle\mathbf{w}_{y_{i}}, \mathbf{x}_{i}\right\rangle \leq \gamma+\max _{y^{\prime} \neq y_{i}}\left(\left\langle\mathbf{w}_{y^{\prime}}, \mathbf{x}_{i}\right\rangle+\epsilon\left\|\mathbf{w}_{y^{\prime}}-\mathbf{w}_{y_{i}}\right\|_{1}\right)\right) .
\end{align}
Meanwhile, we can also yield
\begin{align} \label{thm2:3}
& \max_k \left[ \frac{2 W K}{\gamma}\left[\frac{\epsilon \sqrt{K} d^{\frac{1}{q}}}{\sqrt{|\mathcal{S}_k|}}+\frac{1}{|\mathcal{S}_k|} \sum_{y=1}^{K} \mathbb{E}_{\boldsymbol{\sigma}}\left[\left\| \sum_{(x_i,y_i) \in \mathcal{S}_k}  \sigma_{i} \mathbf{x}_{i} \mathbbm{1}\left(y_{i}=y\right)\right\|_{q}\right] \right]+3 \sqrt{\frac{\log \frac{2}{\delta}}{2 |\mathcal{S}_k|}}  \right] \notag \\
= & \frac{2 W K}{\gamma} \max_k \frac{1}{|\mathcal{S}_k|} \sum_{y=1}^{K} \mathbb{E}_{\boldsymbol{\sigma}}\left[\left\| \sum_{(x_i,y_i) \in \mathcal{S}_k}  \sigma_{i} \mathbf{x}_{i} \mathbbm{1}\left(y_{i}=y\right)\right\|_{q}\right] + \frac{2 W K^2 \epsilon d^{\frac{1}{q}}}{\gamma \sqrt{|\mathcal{S}|}}+ 3 \sqrt{\frac{K \log \frac{2}{\delta}}{2 |\mathcal{S}|}} \notag \\
\leq & \frac{2 W K}{\gamma} \max_{y,k} \frac{K}{|\mathcal{S}_k|} \mathbb{E}_{\boldsymbol{\sigma}}\left[\left\| \sum_{(x_i,y_i) \in \mathcal{S}_k}  \sigma_{i} \mathbf{x}_{i} \mathbbm{1}\left(y_{i}=y\right)\right\|_{q}\right] + \frac{2 W K^2 \epsilon d^{\frac{1}{q}}}{\gamma \sqrt{|\mathcal{S}|}}+  3 \sqrt{\frac{K \log \frac{2}{\delta}}{2 |\mathcal{S}|}} \notag \\
= & \frac{2 W K^3}{\gamma |\mathcal{S}|} \max_{y,k} \mathbb{E}_{\boldsymbol{\sigma}}\left[\left\| \sum_{(x_i,y_i) \in \mathcal{S}_k}  \sigma_{i} \mathbf{x}_{i} 1\left(y_{i}=y\right)\right\|_{q}\right] + \frac{2 W K^2 \epsilon d^{\frac{1}{q}}}{\gamma \sqrt{|\mathcal{S}|}}+  3 \sqrt{\frac{K \log \frac{2}{\delta}}{2 |\mathcal{S}|}}.
\end{align}
Combine (\ref{thm2:1})(\ref{thm2:2})(\ref{thm2:3}), we conclude this proof.
\end{proof}

\section{Experiments Settings}
\label{app:setting}
\subsection{Datasets and Networks}

$\textbf{CIFAR-10}.$ CIFAR-10 contains 60000 points of training data and 10000 of test data with 10 classes. There are 5000 training images and 1000 test images in each class. We split 300 images in each class from the training set as the validation set. We train ResNet-18 and WideResNet-34-10 for 100 epochs on CIFAR-10 and set the learning rate as 0.1.

$\textbf{CIFAR-100}.$ CIFAR-100 contains 60000 points of training data and 10000 of test data with 100 classes. There are 500 training images and 100 test images in each class. We split 30 images in each class from the training set to form the validation set. We train ResNet-18 for 100 epochs on CIFAR-100 and set the learning rate as 0.1.

\subsection{Hyper-parameters used in every method}
The model is trained under the perturbation radius $\epsilon_{train} =8/255$. The batch size is 128, perturbation step size is 0.007 and the number of iterations $K=10$. We use the SGD optimizer. The momentum is 0.9 and the weight decay is 2e-4. We evaluate the model by PGD-100 and CW attack. For PGD-100, we set perturbation radius $\epsilon_{test} = 8/255$ and step size is 0.003. For CW, we set perturbation radius $\epsilon_{test} = 8/255$ and step size is 0.003. For AutoAttack, we use the standard version of AA and set perturbation radius $\epsilon_{test} = 8/255$.% This is the same as in \cite{trades}.

Following \cite{frl}, we set $\tau_1=\tau_2=0.05$, $\alpha_1=\alpha_2=0.05$ for $\textbf{FRL}$ on CIFAR-10. The best $\tau$ and $\alpha$ are chosen from $\{0.01, 0.03, 0.05, 0.07, 0.1\}$ for $\textbf{FRL}$ on CIFAR-100.
Moreover, following \citet{analysis:benz}, we set $\alpha=0.05$ for $\textbf{CSL}$ on CIFAR-10. The best $\alpha$ is chosen from $\{0.01, 0.03, 0.05, 0.07, 0.1\}$ for $\textbf{CSL}$ on CIFAR-100. For our method, the best $\eta$ is chosen from $\{1e-3, 5e-3, 1e-2, 5e-2, 1e-1, 5e-1\}$ for both CIFAR-10 and CIFAR-100.

\subsection{Hardware Specification and Environment}
Our experiments are conducted on a Ubuntu 64-Bit Linux workstation, having 10-core Intel Xeon Silver CPU (2.20 GHz) and 4 Nvidia GeForce RTX 2080 Ti GPUs with 11GB graphics memory.

\section{Supplementary Experiments}
\label{app:exp}

\begin{figure}[h!]
\centering
\subfigure[Ours V.S. TRADES]{
    \begin{minipage}[t]{0.23\linewidth}
    \centering
    \includegraphics[width=1.0\linewidth]{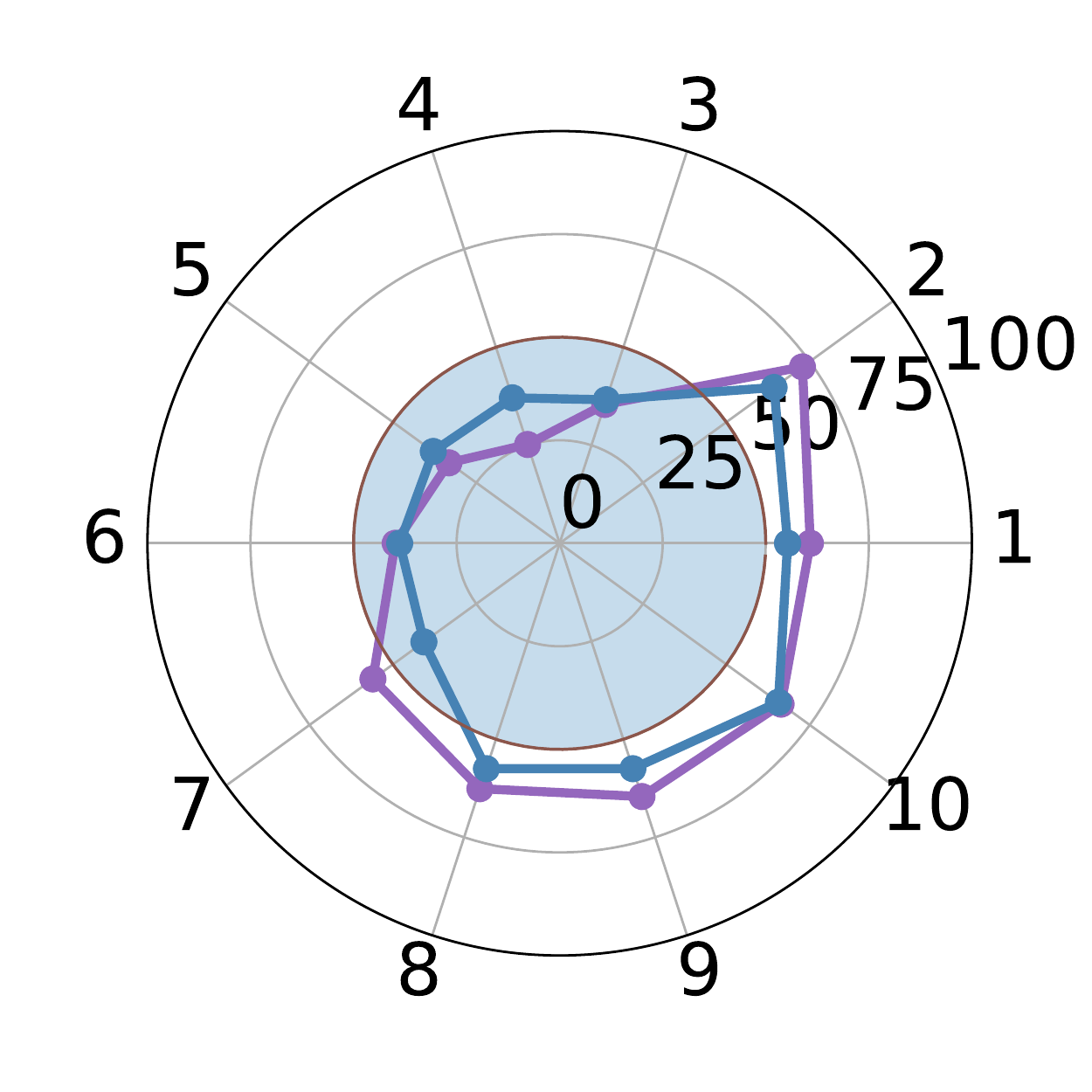}
    \label{fig:compare-pgd-cifar10-vs:trades}
    \end{minipage}
}
\subfigure[Ours V.S. CSL]{
    \begin{minipage}[t]{0.23\linewidth}
    \centering
    \includegraphics[width=1.0\linewidth]{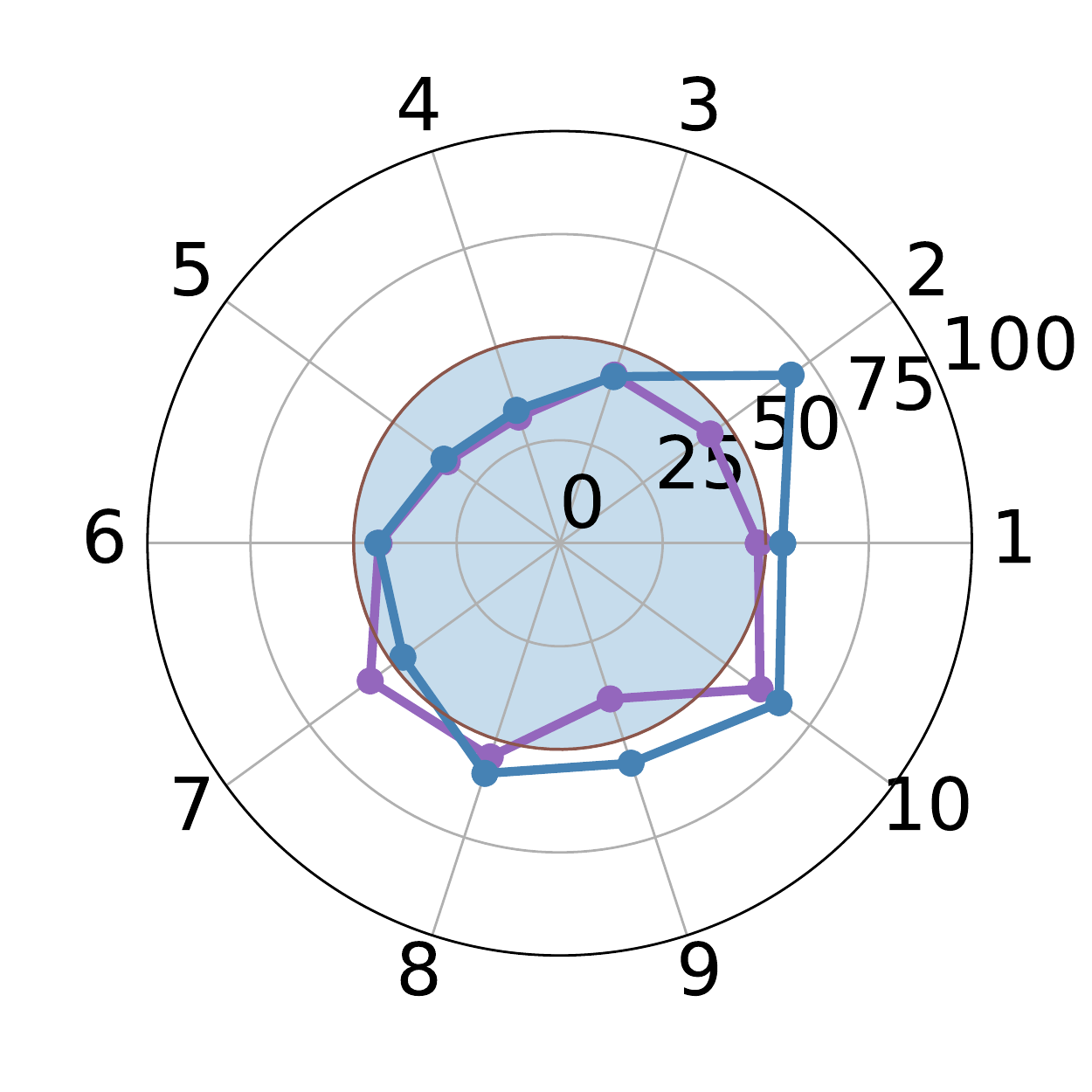}
    \label{fig:compare-pgd-cifar10-vs:csl}
    \end{minipage}
}
\subfigure[Ours V.S. FRL-RW]{
    \begin{minipage}[t]{0.23\linewidth}
    \centering
    \includegraphics[width=1.0\linewidth]{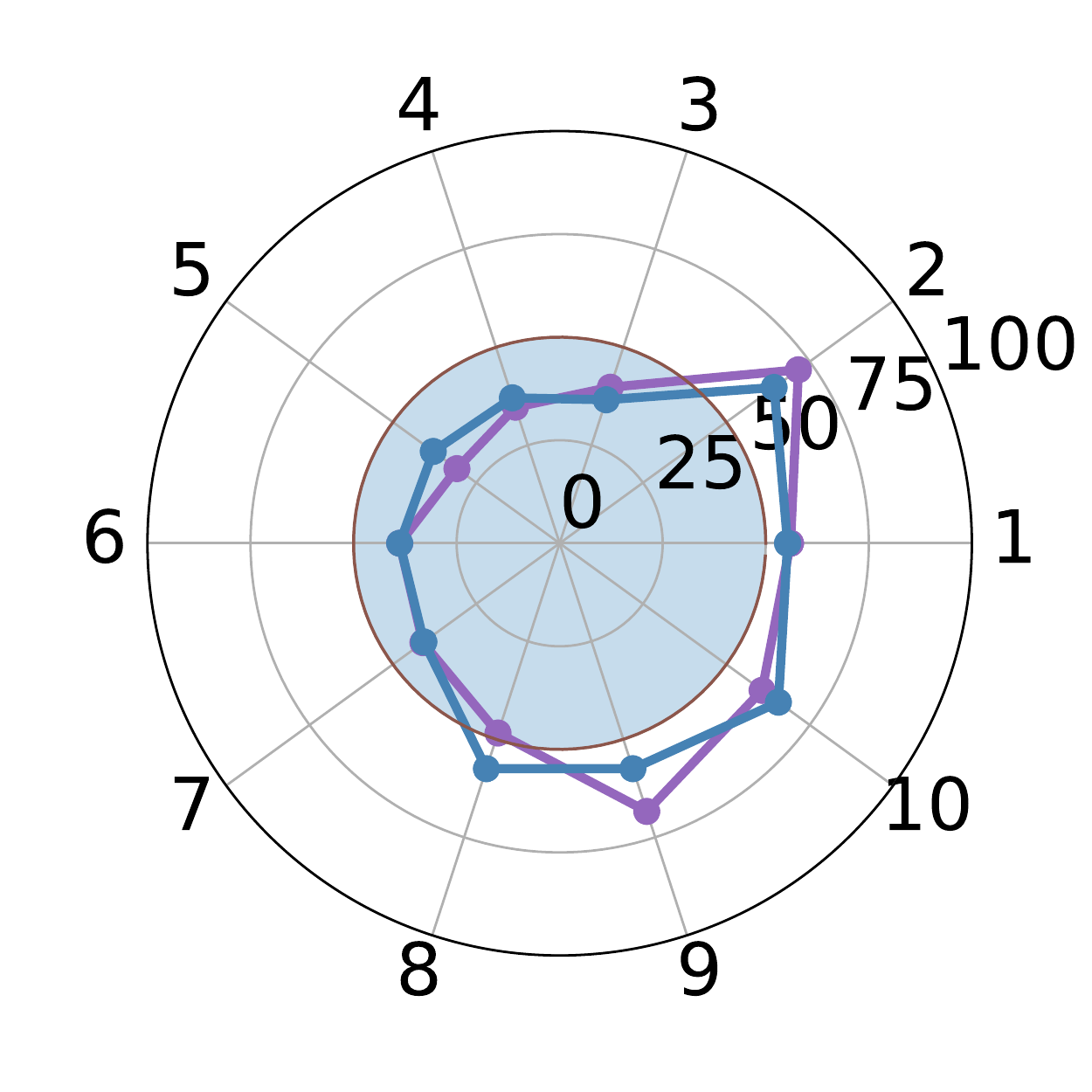}
    \label{fig:compare-pgd-cifar10-vs:rw}
    \end{minipage}
}
\subfigure[Ours V.S. FRL-RWRM]{
    \begin{minipage}[t]{0.23\linewidth}
    \centering
    \includegraphics[width=1.0\linewidth]{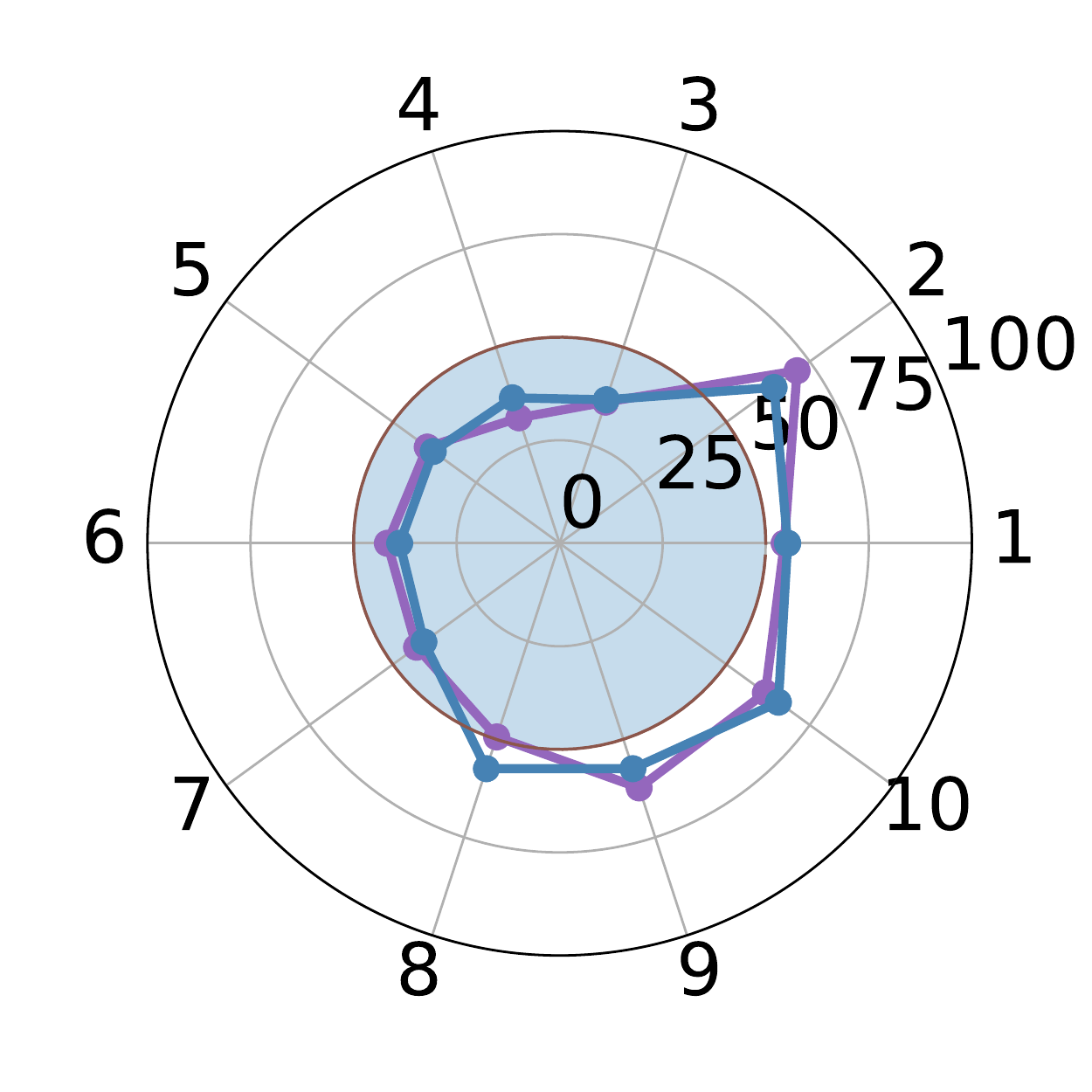}
    \label{fig:compare-pgd-cifar10-vs:rwrm}
    \end{minipage}
}
\caption{Class-wise robust accuracy disparity of all methods using ResNet-18 on CIFAR-10. We compare our method and another method in terms of the class-wise robust accuracy. We denote the results of our method with a blue line, while the results of the comparison methods are represented by a purple line. We evaluate robust accuracy under PGD-100 Attack.}
\label{fig:compare-pgd-cifar10}
\end{figure}

\begin{figure}[h!]
\centering
\subfigure[Ours V.S. TRADES]{
    \begin{minipage}[t]{0.23\linewidth}
    \centering
    \includegraphics[width=1.0\linewidth]{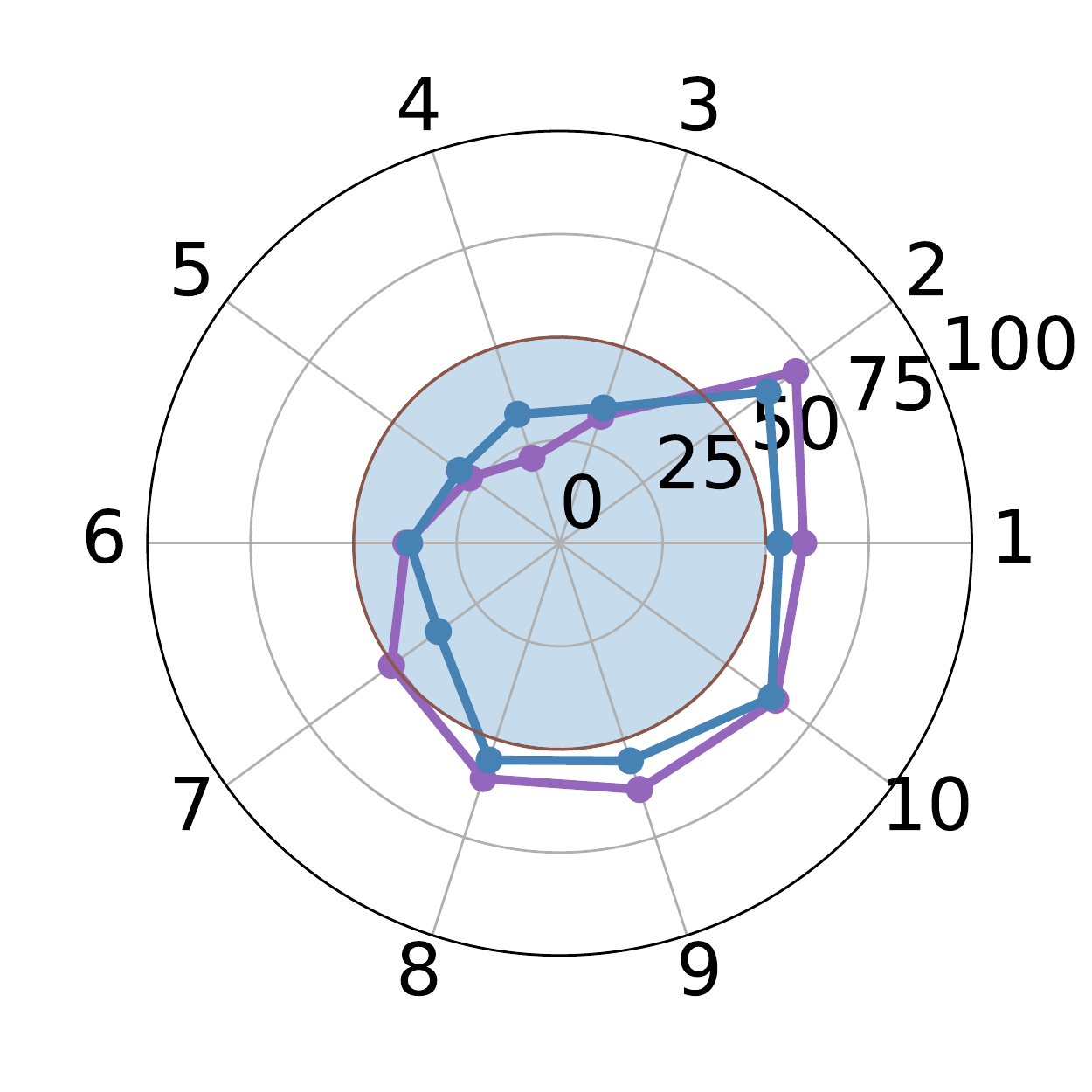}
    \label{fig:compare-AA-cifar10-vs:trades}
    \end{minipage}
}
\subfigure[Ours V.S. CSL]{
    \begin{minipage}[t]{0.23\linewidth}
    \centering
    \includegraphics[width=1.0\linewidth]{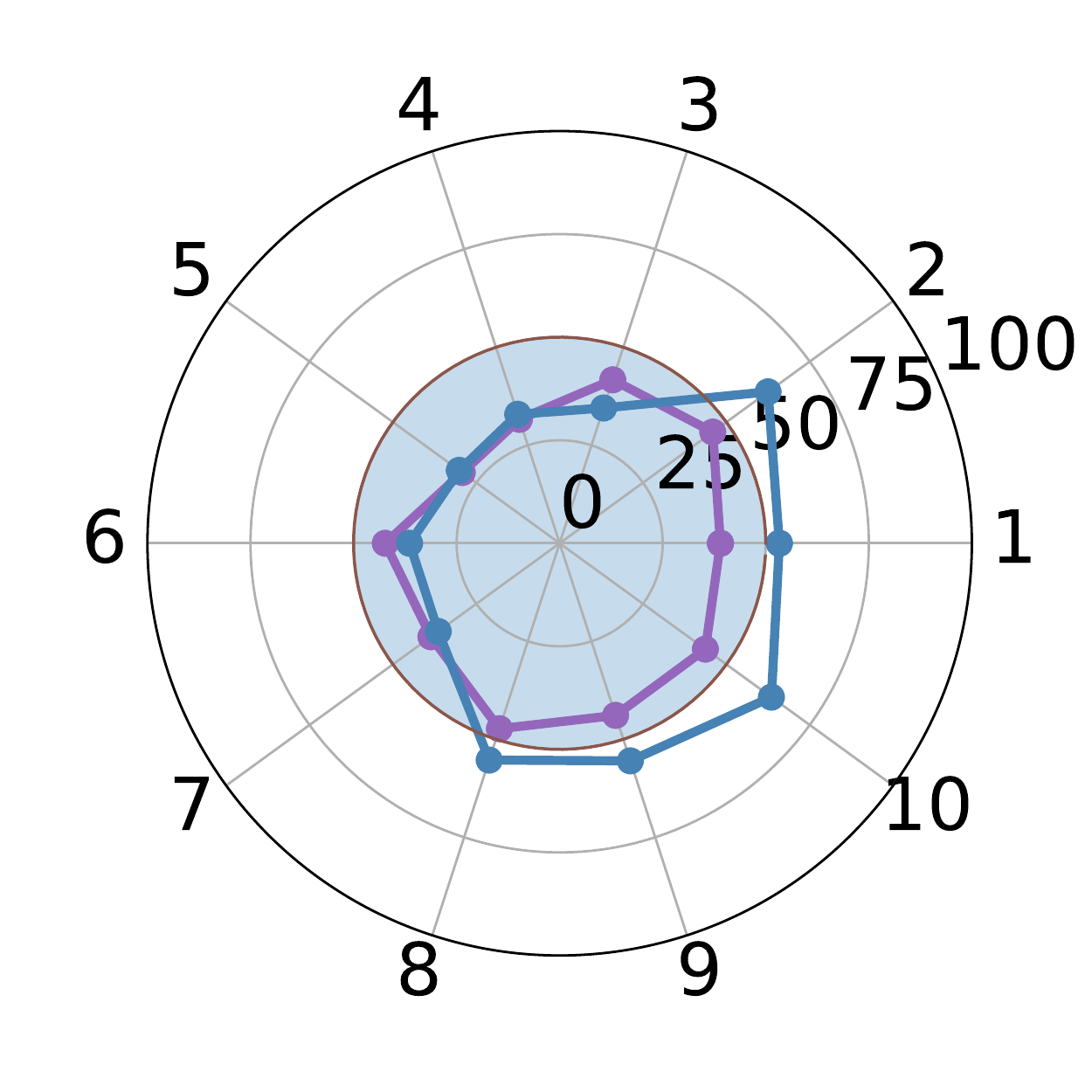}
    \label{fig:compare-AA-cifar10-vs:csl}
    \end{minipage}
}
\subfigure[Ours V.S. FRL-RW]{
    \begin{minipage}[t]{0.23\linewidth}
    \centering
    \includegraphics[width=1.0\linewidth]{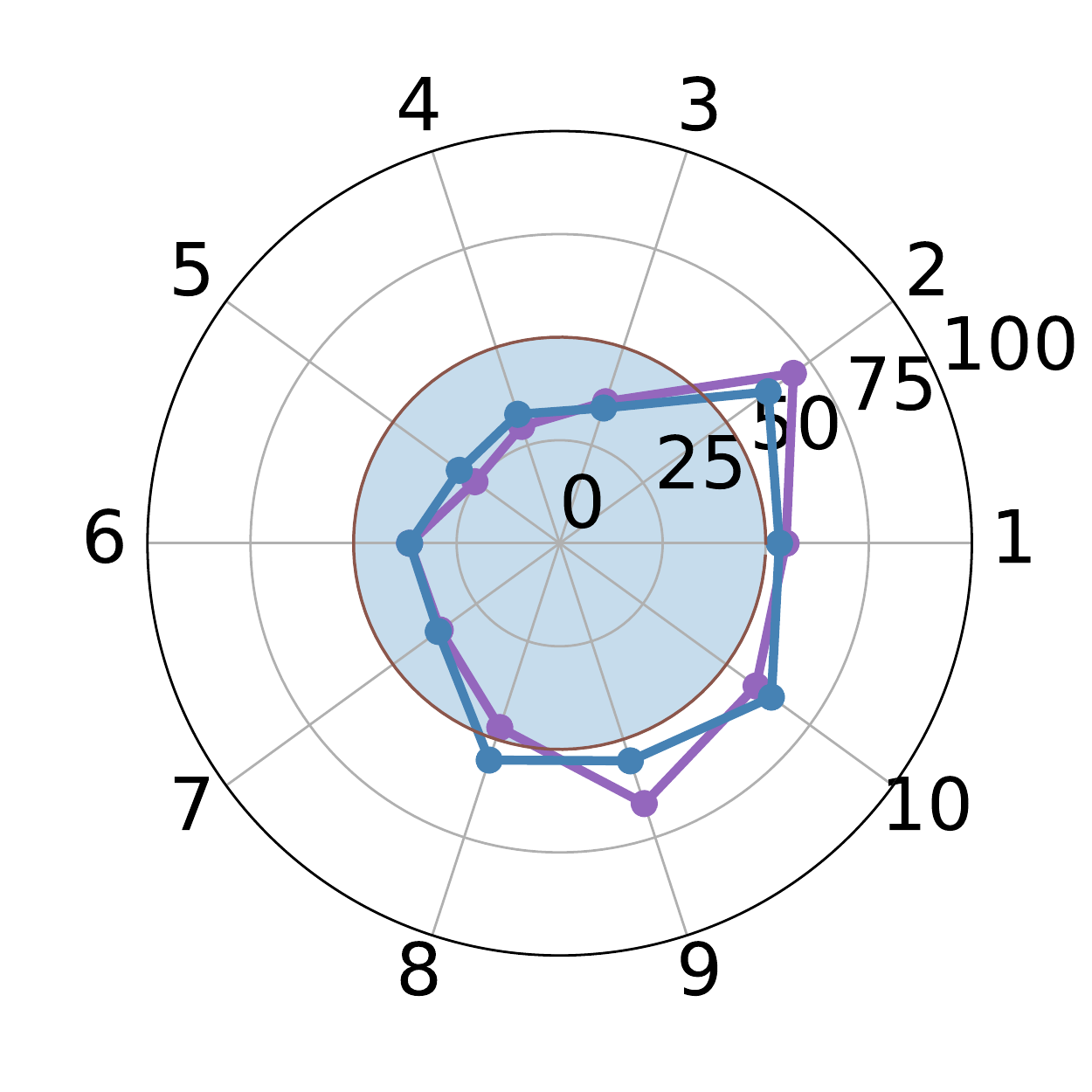}
    \label{fig:compare-AA-cifar10-vs:rw}
    \end{minipage}
}
\subfigure[Ours V.S. FRL-RWRM]{
    \begin{minipage}[t]{0.23\linewidth}
    \centering
    \includegraphics[width=1.0\linewidth]{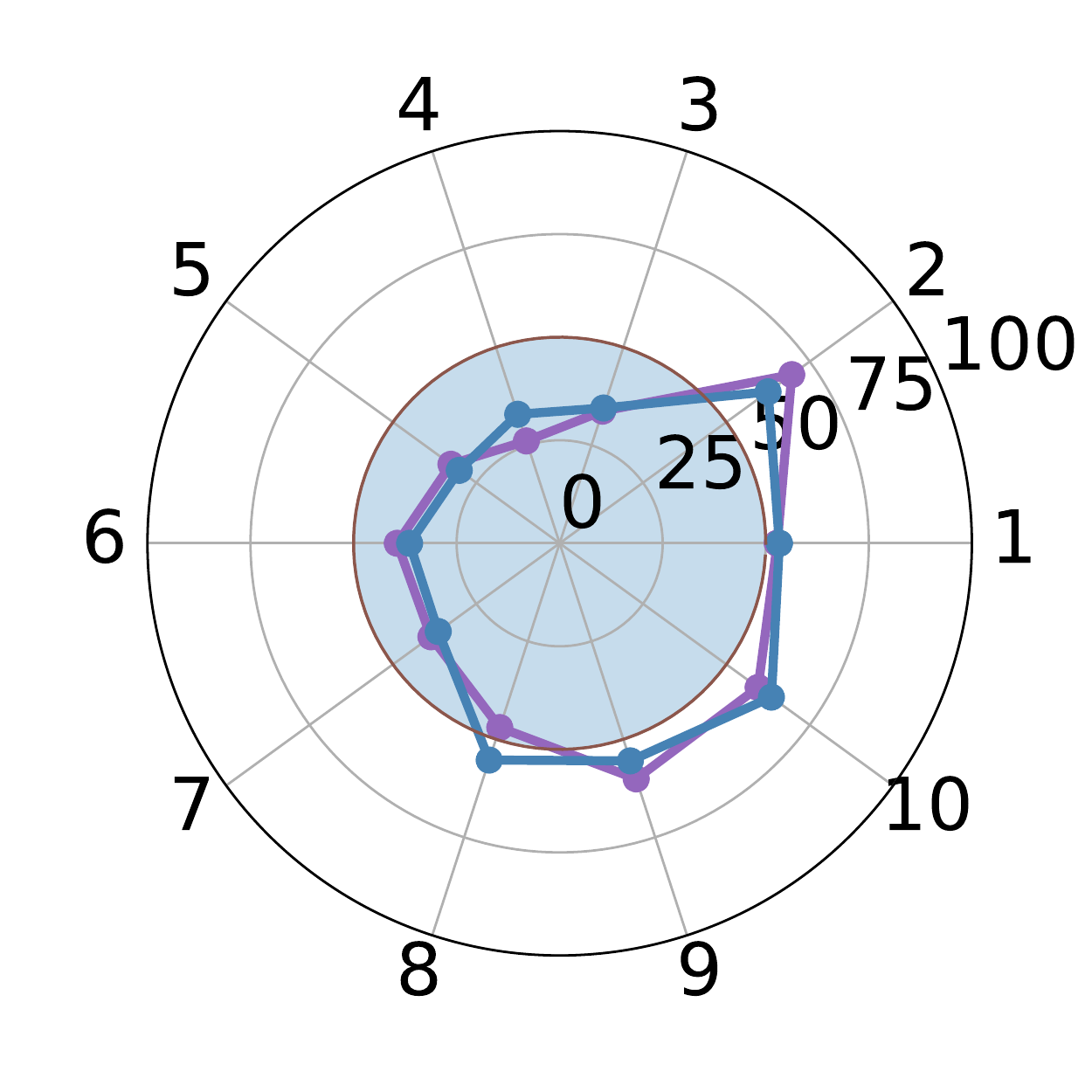}
    \label{fig:compare-AA-cifar10-vs:rwrm}
    \end{minipage}
}
\caption{Class-wise robust accuracy disparity of all methods using ResNet-18 on CIFAR-10. We compare our method and another method in terms of the class-wise robust accuracy. We denote the results of our method with a blue line, while the results of the comparison methods are represented by a purple line. We evaluate robust accuracy under AutoAttack.}
\label{fig:compare-AA-cifar10}
\end{figure}

In Figure \ref{fig:compare-pgd-cifar10}, we compare the class-wise robust accuracy evaluated by PGD-100 Attack between our method and all compared methods on CIFAR-10. As shown in Figure \ref{fig:compare-pgd-cifar10-vs:trades}, we find that our method achieves higher robust accuracy of class-4 and class-5 than TRADES, thus our method obtains a good performance on worst-class robust accuracy. In Figure \ref{fig:compare-pgd-cifar10-vs:csl}, \textbf{CSL} performs worse than our method in most of classes, which leads to a low average robust accuracy. From Figures \ref{fig:compare-pgd-cifar10-vs:rw} and \ref{fig:compare-pgd-cifar10-vs:rwrm}, we can see that our method achieves higher robust accuracy on class-4 than other two baselines, which is the most vulnerable class. Moreover, our proposed method outperforms other two baselines on class-8 and class-10 significantly, which contributes to the highest $\rho_{pgd}$ of our method.

In Figure \ref{fig:compare-AA-cifar10}, we compare the class-wise robust accuracy evaluated by AutoAttack between our method and all compared methods on CIFAR-10. As shown in Figure \ref{fig:compare-AA-cifar10-vs:trades}, we find that our method achieves higher robust accuracy of class-4 and class-5 than TRADES, thus our method obtains a good performance on worst-class robust accuracy. In Figure \ref{fig:compare-AA-cifar10-vs:csl}, \textbf{CSL} performs worse than our method in most of classes, which leads to a low average robust accuracy. From Figures \ref{fig:compare-AA-cifar10-vs:rw} and \ref{fig:compare-AA-cifar10-vs:rwrm}, we can see that our method achieves higher robust accuracy on class-4 than other two baselines, which is the most vulnerable class. Moreover, our proposed method outperforms other two baselines on class-8 and class-10 significantly, which contributes to the highest $\rho_{AA}$ of our method.

\subsection{More Results on $CV$ and $\rho$}
\label{app:cv}

\begin{table}[h]
\caption{Comparison results between $CV$ and $\rho$ using ResNet-18 on CIFAR-10.}
\vskip -0.25in
\begin{center}
\setlength{\tabcolsep}{1mm}{
\begin{tabular}{l|cccc|cccc|cccc}

\toprule
CIFAR-10 & \multicolumn{4}{c}{Natural} & \multicolumn{4}{c}{PGD-100 Attack}  & \multicolumn{4}{c}{AutoAttack} \\
\midrule
Method & Avg & Wst & $CV_{nat}$ & $\rho_{nat}$ & Avg. & Wst. & $CV_{pgd}$ & $\rho_{pgd}$ & Avg. & Wst. & $CV_{AA}$ & $\rho_{AA}$\\
\midrule
TRADES      & \textbf{82.11} & 64.6 & 0.0090  & 0 & \textbf{51.69} & 25.2 & 0.0250  & 0 & \textbf{48.64} & 21.7 &0.0278  & 0 \\
\midrule
FRL-RW      & 81.75 & 69.2 & 0.0148  & \textbf{0.067} & 49.02 & 30.8 & 0.0186  & 0.171 & 46.08 & 25.4 & 0.0222  & 0.118 \\
FRL-RWRM    & 80.69 & \textbf{71.4} & 0.0151  & 0.088 & 49.16 & 32.0 & 0.0150  & 0.221  & 45.94 & 26.1 & 0.0181  & 0.147\\
CSL         & 76.29 & 67.1 & \textbf{0.0018}  & -0.032 & 43.30 & 33.8 & \textbf{0.0024}  & 0.179 & 40.32 & 29.2 & \textbf{0.0031}  & 0.175 \\
Ours        & 80.98 & 69.5 & 0.0037  & 0.062 & 49.13 & \textbf{36.6} & 0.0129  & \textbf{0.403} & 46.04 & \textbf{30.1} & 0.0155  & \textbf{0.334} \\
\bottomrule
\end{tabular}
}
\label{tab:variance-cifar10}
\end{center}
\end{table}
From the results in Table \ref{tab:variance-cifar10}, we can see that \textbf{CSL} obtains the lowest $CV_{nat}$ value, while the average natural accuracy of \textbf{CSL} is the worst. $CV_{nat}$ is not a good measurement because it does not consider the trade-off between average natural accuracy and worst-class natural accuracy while $\rho_{nat}$ is a more reasonable measurement than $CV_{nat}$ by considering average natural accuracy and worst-class natural accuracy at the same time. Under both PGD-100 attack and AutoAttack, we find the similar result on $\rho$ and $CV$. $\rho_{pgd}$ is a more reasonable measurement than $CV_{pgd}$ and $\rho_{AA}$ is a more reasonable measurement than $CV_{AA}$ as well.

\end{document}